\DeclareMathOperator*{\argmax}{arg\,max}
\theoremstyle{plain}
\newtheorem{theorem}{Theorem}
\theoremstyle{plain}
\newtheorem{proposition}{Proposition}
\theoremstyle{plain}
\newtheorem{lemma}{Lemma}
\theoremstyle{plain}
\newtheorem{corollary}{Corollary}
\theoremstyle{definition}
\newtheorem{assumption}{Assumption}
\theoremstyle{definition}
\newtheorem{definition}{Definition}
\theoremstyle{remark}
\Crefname{equation}{Equation}{Eqs.}
\Crefname{assumption}{Assumption}{Assum.}
\begin{document}

\title{OptMap: Geometric Map Distillation via Submodular Maximization}

\author{David Thorne$^1$, Nathan Chan$^1$, Christa S. Robison$^2$, Philip R. Osteen$^2$, Brett T. Lopez$^1$
\thanks{*This research was sponsored by the DEVCOM Army Research Laboratory (ARL) under SARA CRA
W911NF-24-2-0017. Distribution Statement A: Approved for public release; distribution is unlimited.}
\thanks{$^1$ University of California, Los Angeles, Los Angeles, CA, USA {\tt\small \{davidthorne, nchan22, btlopez\}@ucla.edu}}%
\thanks{$^{2}$DEVCOM Army Research Laboratory (ARL), Adelphi, MD, USA. \{\texttt{christa.s.robison, philip.r.osteen \}.civ@army.mil}}}%



\maketitle

\addtolength{\topmargin}{0.05in}

\begin{abstract}
Autonomous robots rely on geometric maps to inform a diverse set of perception and decision-making algorithms.
As autonomy requires reasoning and planning on multiple scales, each algorithm may require a different map for optimal performance.
LiDAR sensors generate an abundance of geometric data (up to 50 MB per second) to satisfy these diverse requirements. 
However, the point-based operations required to process perception data are both memory and computationally expensive. 
Such operations can be bypassed via learned representations that encode similarity, but selecting informative, size-constrained maps remains an NP-hard combinatorial problem.
In this work we present OptMap: a geometric map distillation algorithm which achieves online, application-specific map generation via multiple theoretical and algorithmic innovations.
A central feature is the maximization of set functions that exhibit diminishing returns, i.e., submodularity, using polynomial-time algorithms with provably near-optimal solutions.
We formulate a novel submodular reward function which quantifies informativeness, reduces input set sizes, and minimizes solution bias.
Further, we propose a dynamically reordered streaming submodular algorithm which improves empirical solution quality and addresses input order bias via an online approximation of the value of all scans.
Testing was conducted on open-source and custom datasets with an emphasis on long-duration mapping sessions, highlighting OptMap's minimal computation requirements.
OptMap’s practical value is then illustrated through its application to online geometric change detection.
Open-source ROS1 and ROS2 packages are available and can be used alongside any LiDAR odometry algorithm.
\end{abstract}

\begin{IEEEkeywords}
Mapping, Submodular Optimization, Formal Methods in Robotics and Automation, Field Robotics
\end{IEEEkeywords}

\section*{Symbols}
\label{sec:symbols}

\begin{table}[h]
\begin{tabular}{lp{2.45in}}
\hline
\textbf{Symbols} & \textbf{Meaning} \\
\hline
$E$, $R$, $S$ & Input, Reduced, and Solution Sets. \\[4pt]
$k$ & Cardinality constraint, i.e., $|S| \leq k$. \\[4pt]
$OPT$ & Optimal value, $f(x_{opt})=OPT$. \\[4pt]
$\mathbb{S}^{n}$, $\mathcal{A}(\mathbb{S}^{n})$ & $n-$hypersphere and its surface area. \\[4pt]
$e_{i} \in E$ & $i$'th input set element (descriptor). \\[4pt]
$r_{i} \in R$ & $i$'th reduced set element (descriptor). \\[4pt]
$\mathbf{0}$ & Null element, $\mathbf{0} = [0,0,...,0]$. \\[4pt]
$d(a,B)$ & Euclidean distance to a set, $d(a,B) = \min_{b \in B} ||a - b||_{2}$, $a,b \in \mathbb{R}^{n}$. \\[4pt]
$f(a|B)$ & Marginal value, $f(a|B) = f(B \cup a) - f(B)$. \\[2pt]
\hline
\end{tabular}
\end{table}

\newpage
\section{Introduction}
\label{sec:introduction}

\begin{figure}[t!]
    \centering
    \begin{subfigure}[b]{9cm}
        \centering
        \includegraphics[width=0.96\textwidth]{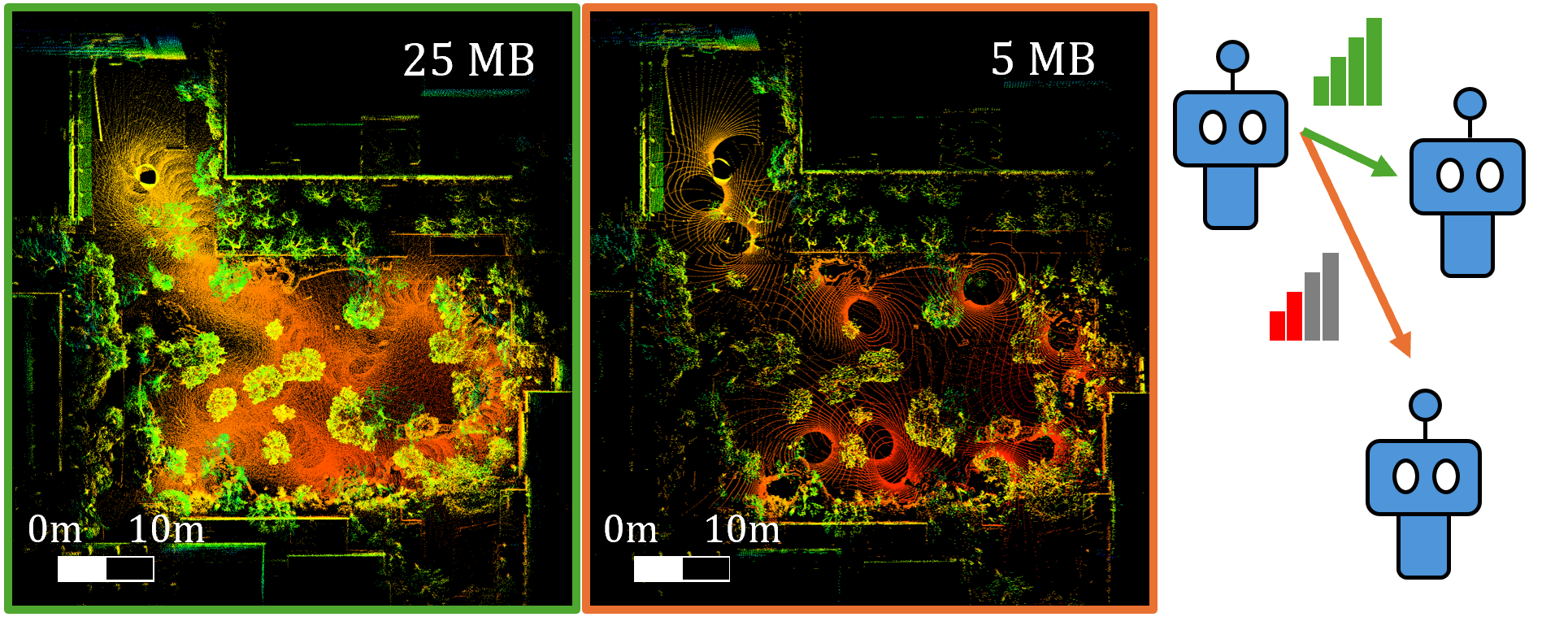}
        \caption{OptMap application maps for multi-robot map sharing. Maps can be generated to summarize a full session (Sculpture Garden) with a given size dictated by communication bandwidth constraints.}
        \label{fig:top-right-mapshare}
        \vspace{0.05in}
    \end{subfigure}\\
    \begin{subfigure}[b]{9cm}
        \centering
        \includegraphics[width=0.96\textwidth]{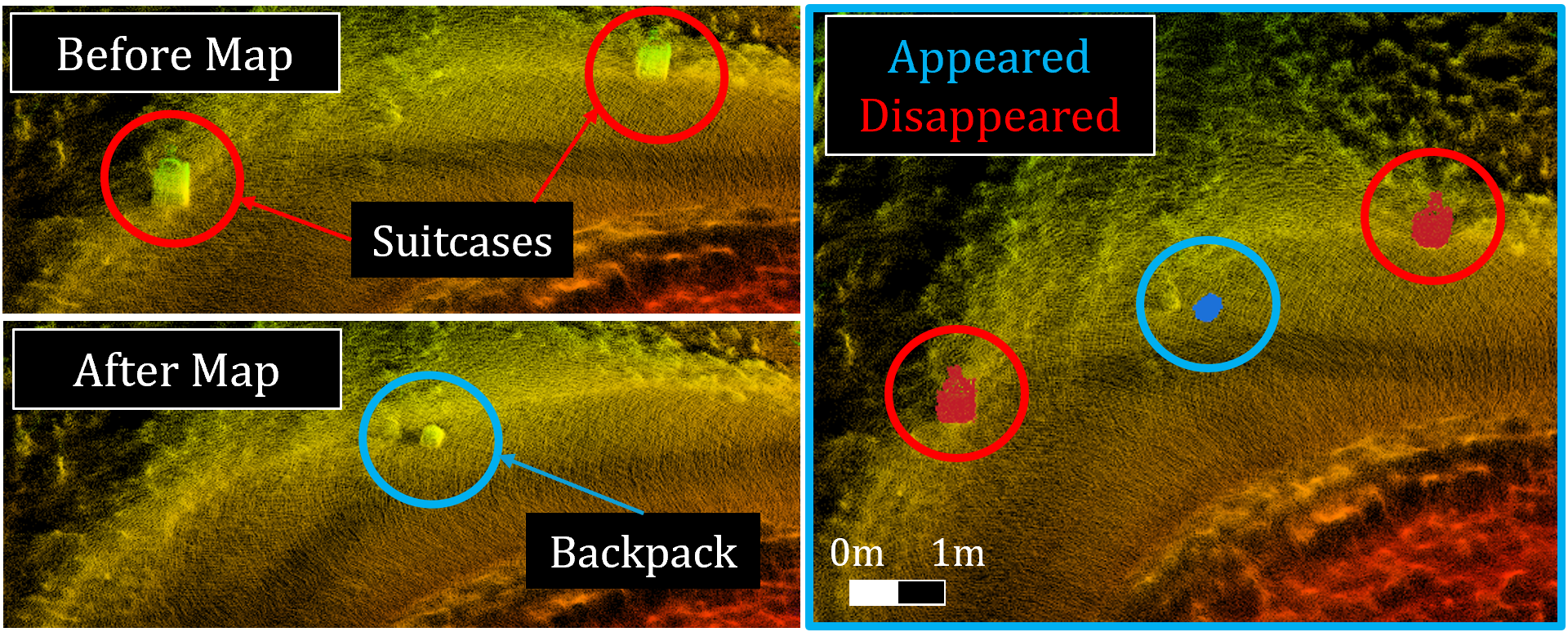}
        \caption{OptMap application maps for change detection (left). Online change detection demands concise yet dense maps. OptMap can provide these dense maps given a location or region query from a user.}
        \label{fig:top-right-changedetect}
    \end{subfigure}
    \caption{OptMap is a geometric map distillation algorithm which provides size-constrained, informative geometric maps for online autonomy algorithms.}
    \label{fig:top-right}
    \vskip -0.25in
\end{figure}

\begin{figure*}[t]
    \centering
    \vspace{6pt}
    \includegraphics[width=0.99\textwidth]{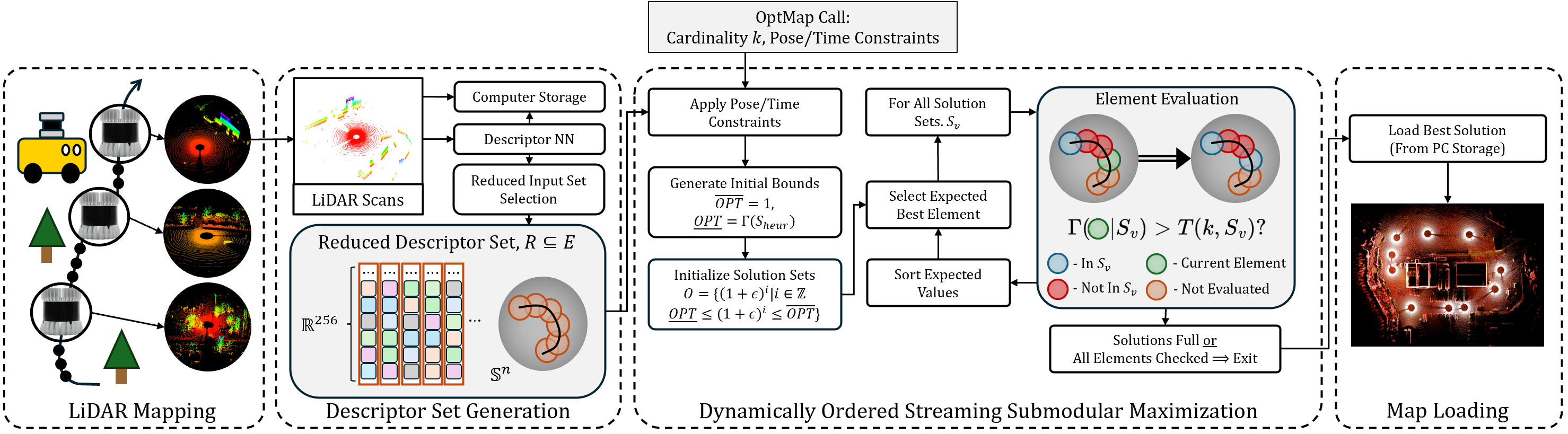}
    \caption{The OptMap pipeline consists of four stages: (i) LiDAR mapping (ii) descriptor set generation, (iii) dynamically ordered streaming submodular maximization, and (iv) map loading. Point clouds from LiDARs are used to generate descriptors which are then saved to computer storage. Redundant descriptors are then filtered to improve evaluation time. The dynamically ordered streaming submodular maximization uses an online approximate value of each element to address input order bias and select informative solutions.}
    \label{fig:OptMap_pipeline}
    \vskip -0.2in
\end{figure*}

\IEEEPARstart{M}{odern} autonomous systems use a modular software architecture with separate algorithms to perceive the environment, plan collision-free paths, estimate vehicle motion, and make high-level decisions. 
Many of these algorithms require a geometric map with \textit{unique requirements} to optimize performance and processing time.
For example, multi-robot teams need to share summary maps with variable size subject to communication bandwidth constraints (\cref{fig:top-right-mapshare}), but detecting small geometric changes in previously mapped areas requires concise yet dense maps (\cref{fig:top-right-changedetect}).
One approach to meet the requirements of different subsystems is to maintain a single point cloud map and use point operations, such as voxelization and nearest neighbor search, to provide application-specific maps.
However, this is prohibitively expensive from a memory and computation perspective because modern LiDAR sensors collect $250,000$-point scans at 20 Hz, or about 50 MB of data generated every second. 
Achieving the examples in \cref{fig:top-right} thus demands a new paradigm for generating application-specific maps using memory- and computationally-efficient learned representations.
These representations depict a point cloud as a high-dimensional vector and allow a single batch optimization to create an application map, removing the need to store the entire set in memory or perform point-based computations.
We call this problem \emph{geometric map distillation}, because key information gathered by perception sensors is extracted to create an application-specific map with maximum utility.
To this end, this paper proposes OptMap, an online geometric map distillation algorithm.

Geometric map distillation is most naturally formulated as a combinatorial optimization problem (i.e., optimal subset selection \cite{fujishige2005submodular}) that aims to maximize the informativeness of the output map given all available perception data.
Within this formulation, online mapping algorithms suffer from a causality constraint, as filtering-based approaches such as keyframe or occupancy maps must select solutions before future observations are available. 
For point clouds, informativeness is readily quantified using the amount of redundant data or overlapping regions, but optimizations that rely on direct overlap computations are expensive because geometric maps contain millions of points.
Solving such problems with formal suboptimality bounds is thus only possible as an offline process.
Previous research sought to simplify overlap calculations by learning to encode point cloud similarity via compact descriptor vectors \cite{ma2022overlaptransformer}.
Although most descriptors are used for global place recognition \cite{chen2022overlapnet, arce2023padloc, komorowski2021minkloc3d}, recent work used them to formulate submodular reward functions to summarize large LiDAR datasets \cite{thorne2025submodular}.
By combining descriptors with submodularity, or the mathematically defined property of diminishing returns, \cite{thorne2025submodular} leveraged several efficient submodular maximization algorithms for map summarization with no direct overlap computations.
The most famous such algorithm is the greedy algorithm, which guarantees a $(1-\frac{1}{e})$-suboptimality bound for NP-hard cardinality constrained submodular problems \cite{nemhauser1978analysis, nemhauser1978best}.
However, a recent class of streaming submodular algorithms is pertinent to optimizing over large datasets because they only require one pass over them \cite{badanidiyuru2014streaming, chekuri2015streaming}.
Streaming algorithms have been used for problems similar to geometric map distillation such as search engines and image dataset summarization \cite{feldman2018less, kazemi2019submodular} which both aim to select informative subsets from extremely large input sets.

In this paper, we leverage learned descriptors and streaming submodular maximization to develop an algorithm for real-time geometric map distillation called OptMap (\cref{fig:OptMap_pipeline}).
We propose a novel submodular function which assigns weights to elements based on their similarity to prior scans. 
This reduces bias towards over-sampled locations and allows for computation improvements by recognizing redundant geometric data.
This submodular function is optimized by a streaming submodular maximization algorithm that uses a novel dynamic reordering approach to improve solution quality and address multiple forms of input order bias.
Further computation time improvements are derived from our ability to use the simple geometric relationships between descriptors to derive tight \emph{a priori} solution bounds.
We demonstrate OptMap's ability to distill large LiDAR datasets in several settings, which emphasize high-quality output maps with minimal computational requirements.
Additionally, OptMap's utility is demonstrated for online change detection, where current and prior LiDAR maps are compared to identify objects that have appeared, disappeared, or moved.
OptMap is well-suited for this task as it can generate arbitrarily dense submaps on demand, unlike keyframes or downsampled dense global maps whose size and adaptability is bounded by memory and compute constraints.
OptMap is intended to run alongside any ROS1 or ROS2 LiDAR odometry package, and open-source code (including a lightly modified open-source LiDAR odometry algorithm \cite{chen2023direct}) is released for easy development\footnote{\url{https://github.com/vectr-ucla/optmap}}.
OptMap could be adapted to other perception sensors given equivalent descriptors for mutual information.
To summarize, the contributions are:
\begin{itemize}
    \item \textbf{Continuous Exemplar-Based Clustering}. Novel submodular function which quantifies informativeness without bias from oversampling and improves evaluation time.
    \item \textbf{Dynamic Reordering}. Addresses sensitivity of streaming submodular algorithms to input order by approximating the marginal values of elements to ensure stream order prioritizes scans with high expected value.
    \item \textbf{Tight \emph{A Priori} Solution Bounds}. Save time via tight \emph{a priori} optimal solution value bounds derived from the simple geometric relationships between descriptors.
    \item \textbf{Ablation Studies}. Studies demonstrate improvements due to the novel reward function, dynamic reordering, and tight initial bounds.
    \item \textbf{Open-Source Code}. Released ROS1 and ROS2 packages. OptMap is meant to be run alongside any LiDAR odometry algorithm for quick testing and development.
\end{itemize}
 
\section{Related Works}
\label{sec:related_works}

\subsection{Geometric Maps for Mobile Robots} 
Key performance metrics for geometric maps include the level of detail and coverage area, the complexity of applying map updates (e.g., loop closures), and the level of flexibility or specificity a map has towards serving multiple functions \cite{slam-handbook}.
For LiDAR mapping, direct geometric maps are generated via the union of point clouds as complete, dense maps which use all collected scans \cite{cai2021ikd, xu2022fast} or as lightweight keyframe maps which contain a subset of selected scans \cite{zhang2014loam, chen2023direct, shan2020lio}.
Keyframe maps are easily maintained over long-duration missions as they require less memory and can be updated via pose-graph optimization, but complete maps provide a higher degree of detail.
3D occupancy grids are the most commonly used map type for autonomous planning algorithms \cite{hornung13auro}.
Their primary advantages are that they have minimal memory requirements, are easily searched, explicitly model free-space, and are easily generated from multiple fused sensors \cite{yue2018hierarchical, yue2019multilevel, garg2020semantics, cai2023occupancy}.
The downside is that they require computationally expensive ray tracing to generate, and can be slow to update in batches (e.g., after a loop closure).
Hierarchical graph-based maps can be memory efficient and carry significant detail \cite{kim2021plgrim}, but are often designed by experts for specific applications and therefore are not well-suited to multiple diverse algorithms.
A trend for recent online mapping algorithms is to reduce the memory burden of long-duration missions \cite{dong2025lidar, thorne2025submodular}.
This continued optimization means future maps generated from SLAM may become more sparse and thus less useful for downstream autonomy algorithms.

\textbf{Learned Point Cloud Global Descriptors} compress dense point clouds into compact vectors where the distance between descriptors encodes the similarity between point clouds.
While a few model-free LiDAR descriptors exist (e.g., \cite{cop2018delight}), most work in this field has been towards developing neural networks to generate descriptors for global place recognition.
Notable recent advances in learned point cloud descriptors include the use of transformers to generate yaw-invariant descriptors \cite{ma2022overlaptransformer, arce2023padloc}, and the use of 3D convolutions to enable descriptor generation on sparse point clouds \cite{komorowski2021minkloc3d}.
In \cite{chen2022overlapnet}, the norm of the difference between descriptors can be used to predict the number of points shared between two point clouds.

\subsection{Submodular Maximization}
Submodular maximization is a powerful tool for combinatorial optimization problems which exhibit monotonicity and diminishing returns because it provides efficient solutions to NP-hard problems with tight suboptimality guarantees.
The most important result in the field is the proven performance of the greedy algorithm when maximizing non-negative monotone submodular functions subject to cardinality constraints, which achieves a $1-\frac{1}{e} \approx 0.63$-suboptimality bound. 
This bound is optimal for any algorithm that queries the submodular function a polynomial number of times \cite{nemhauser1978analysis, nemhauser1978best}, where greedy requires $O(kN)$ queries (evaluations) to make a solution of size $k$ from an input set with $N$ elements.
The computation and memory complexity of the greedy algorithm has been improved \cite{minoux2005accelerated, mirzasoleiman2015lazier, mirzasoleiman2016fast}, with extensions proposed for non-monotone functions \cite{feige2011maximizing, gupta2010constrained, das2011submodular}.
Advances in submodular theory are frequently applied to computing and machine learning tasks such as dataset summarization, image classification, and search result generation \cite{dasgupta2013summarization, dueck2007non, el2011beyond}.
For example, Lin et al.~\cite{lin2011class} provides submodular functions that balance representation and diversity for document summarization. 
Fujishige et al.~\cite{fujishige2005submodular} provides coverage of classical algorithms, common submodular functions, and their constraints.

\textbf{Streaming Submodular} algorithms were developed to optimize over large input sets such as video streams or machine learning training datasets.
Two approaches to streaming submodular maximization were proposed concurrently by \cite{badanidiyuru2014streaming} and \cite{chakrabarti2015submodular}.
Both aimed to maximize a submodular function using only a single evaluation pass which queries the submodular function $O(N)$ times.
The fundamental idea of \cite{badanidiyuru2014streaming} is that provably near-optimal solutions can be built by identifying elements of the input set that are valuable relative to a known optimal solution value.
Given that the optimal value cannot be known \textit{a priori}, multiple solutions with unique guesses (separated by a granularity constant $\epsilon$) of the optimal value are maintained in parallel.
Bounds on the optimal solution value (known or found online) ensure that one of the maintained solutions has a guess that is close to the true optimal, providing a $(1/2-\epsilon)$-suboptimality bound.
One often overlooked property of \cite{badanidiyuru2014streaming} is that the suboptimality guarantee does not require a complete evaluation pass. 
For example, if all solutions reach the cardinality constraint before the full stream has been evaluated, the algorithm may terminate early and save a significant portion of the required computations.

A major limitation of streaming submodular maximization algorithms is the need to select solutions with limited information, i.e., before evaluating the entire stream.
Limited information degrades solution quality because low-quality elements may be included in solutions solely because they appear earlier in the stream.
Similarly, high-quality elements may be omitted because they are evaluated at the end of the stream.
Multiple methods to improve solution quality have been proposed such as \cite{chekuri2015streaming} which swaps solution elements when a higher quality element is presented.
Solution swaps are used in combination with randomized sub-sampling in \cite{feldman2018less} to achieve near-optimal solutions in expectation with fewer submodular function evaluations.
Several methods adopt a pseudo-streaming approach where batches of elements are considered together (e.g., using a greedy algorithm on subsets of the input set) \cite{kazemi2019submodular}, or where inflated solution sets are refined by post-processing \cite{alaluf2022optimal}.
These methods generally improve solution quality at the cost of either the deterministic nature of the algorithm or additional algorithmic steps which increase computation time. 
In specific cases where bias in input sets is known \emph{a priori} such as demographic problems, a partition constraint can be used to ensure the solution is not biased \cite{el2020fairness}.

Dynamic submodular maximization algorithms \cite{monemizadeh2020dynamic, banihashem2023dynamic, chen2022complexity, banihashem2024dynamic} have a similar motivation to streaming algorithms as they aim to maximize a submodular function through a series of input set insertions and deletions.
Dynamic algorithms are a generalization of streaming algorithms that allow for input set insertion and deletion and, therefore, suffer from worse memory and computation performance.

\textbf{Submodular Maximization in Robotics} largely centers on improving the performance and efficiency of autonomous decision making.
Many works prove that the underlying problem is submodular as theoretical justification for utilizing the greedy algorithm.
Graph optimization is readily formulated as submodular maximization, with notable works in pose graph sparsification \cite{chen2021anchor, kretzschmar2011efficient, khosoussi2019reliable}, path planning \cite{zhang2016submodular, bai2024multi}, and area search and coverage \cite{ramesh2024approximate, li2024computation}.
Multi-robot coordination has seen numerous advances in which submodularity can guaranty centralized and distributed planning with near-optimal coverage under tight communication constraints \cite{corah2019communication, corah2019distributed, gharesifard2017distributed}.
Submodularity is rarely found in perception algorithms, with notable exceptions in previously mentioned pose graph SLAM \cite{chen2021anchor, kretzschmar2011efficient, khosoussi2019reliable}, and some works in selecting either visual features \cite{carlone2018attention} or LiDAR scans \cite{thorne2025submodular} which best constrain localization.

\section{Preliminary on Submodularity}
\label{sec:preliminaries}

Let $X$ be a set of elements and $2^{X}$ be its power set.

\begin{definition}
    A set function $f:2^{X} \rightarrow \mathbb{R}_{\geq 0}$ is said to be \emph{monotonic non-decreasing} if $f(\emptyset) \geq 0$ and $f(B) > f(A)$ for all subsets $A \subseteq B \subseteq X$.
\end{definition}

\begin{definition}
\label{def:submodularity}
    A set function $f:2^{X} \rightarrow \mathbb{R}_{\geq 0}$ is \emph{submodular} if for $A \subseteq B \subseteq X$ and element $x \in X \, \backslash \, B$
\end{definition}
\vskip -0.25in
\begin{align}
    f(A \cup x) - f(A) \geq f(B \cup x) - f(B).
\end{align}

We refer to $f(A \cup x) - f(A)$ as the marginal value of $x$ with respect to $A$ and use the simplified notation $f(x|A)$.
The intuitive way to understand \cref{def:submodularity} with respect to submodular maximization is that the marginal value of any element must diminish as the solution increases in size.

Maximization of monotone non-decreasing, submodular functions subject to cardinality constraints is a well-studied problem, with the seminal result being the $(1-\frac{1}{e})$-suboptimality guarantee of the greedy algorithm \cite{nemhauser1978analysis, nemhauser1978best}.

\begin{definition}
\label{def:incomplete_information}
    ($\kappa$-wise information \cite{downie2022submodular}) Given a submodular function $f$, the $\kappa$-wise information set is defined as the set of tuples $\{(S, f(S)) | S \subseteq X, |S| \leq \kappa\}$. 
    When $\kappa = 2$, we refer to this as pairwise information.
\end{definition}

Knowing pairwise information for a given function $f$ and set $X$ is equivalent to having a look-up table for the value of $f$ given any solution $S$ such that $|S| \leq 2$.
Using pairwise information, the estimated marginal value of element $x \in X$ with respect to any solution $A \subseteq X$ can be defined as 
\begin{equation}
\label{eq:pairwise_information_approx}
    \overline{f}(x|A) = \min_{a \in A} f(x|a).
\end{equation}

The pairwise marginal value maintains submodularity and monotonicity of $f$, and \cite{downie2022submodular} proposes an optimistic greedy algorithm which is shown to produce empirically near-optimal results for several submodular maximization problems.
We use pairwise information in \cref{sec:dynamic_reordering} as a computationally lightweight means for approximating the marginal value of elements with respect to a solution when such marginal values would typically require expensive computations.

\subsection{Streaming Submodular Maximization}

\begin{algorithm}[t]
\caption{Sieve-Streaming Known Solution Bounds \cite{badanidiyuru2014streaming}}
\label{alg:seive-streaming}
\begin{algorithmic}[1]
    \renewcommand{\algorithmicrequire}{\textbf{Given:}}
    \renewcommand{\algorithmicensure}{\textbf{Output:}}
    \REQUIRE $E, \underline{OPT}, \overline{OPT}, k$ \\
    \ENSURE $\text{arg}\,\text{max}_{S_{v}} f(S_{v})$ \\
    
    \STATE {\tt\footnotesize\color{blue}// Create OPT guesses and solution sets}
    \STATE $O = \{(1+\epsilon)^{i} ~|~i \in \mathbb{Z},~ \underline{OPT} \leq (1+\epsilon)^{i} \leq \overline{OPT} \}$
    \FOR {$v \in O$}
        \STATE $S_{v} \leftarrow \emptyset$
    \ENDFOR
    
    \STATE {\tt\footnotesize\color{blue}// Make single pass over the input set}
    \FOR {$i \in \{1, 2, ..., |E|\}$}
        \STATE {\tt\footnotesize\color{blue}// Find marginal values on parallel threads}
        \FOR {$v\in O$}
            \IF {$f(e_{i} | S_{v}) \geq \frac{v/2-f(S_{v})}{k-|S_{v}|} \ \&\ |S_{v}| \leq k$}
                \STATE $S_{v} = S_{v} \cup e_{i}$
            \ENDIF
        \ENDFOR
    \ENDFOR
\end{algorithmic}
\end{algorithm}

Streaming submodular algorithms seek to maximize a submodular function using only a single pass over all elements in an input set.
If the optimal solution value $OPT$ were known \textit{a priori}, a solution could be iteratively built by comparing the marginal value of each element against $OPT$.
For example, if each element added to a solution with final size $k$ has a marginal value of at least $OPT/k$, then the produced solution would be optimal.
Using $OPT/k$ as a threshold is not practical in most cases, but the central contribution of \cite{badanidiyuru2014streaming} is to develop a threshold which yields a provably near-optimal solution.
The threshold proposed by \cite{badanidiyuru2014streaming} and used in \cref{alg:seive-streaming}, line 10 is

\begin{equation}
\label{eq:SS_thresh}
    T(k,S) = \frac{OPT/2-f(S)}{k-|S|}.
\end{equation}

In most cases, $OPT$ cannot be known \emph{a priori}, so multiple solutions are maintained where each solution has a guess, $v$, between \emph{a priori} optimal solution bounds $\overline{OPT} \geq v \geq \underline{OPT}$.
In the case where multiple solutions are updated throughout the stream, $OPT$ in \cref{eq:SS_thresh} is replaced with $v$.

\begin{proposition}
\label{prop:sieve-streaming}
    (Sieve-Streaming \cite{badanidiyuru2014streaming}) Given a submodular function f, known optimal solution bounds, and threshold $T(k,S)$, \cref{alg:seive-streaming} satisfies the following properties
    \begin{itemize}
        \item It outputs a solution $S$ such that $|S| \leq k$ and $f(S) \geq (\frac{1}{2}-\epsilon)*OPT$.
        \item It does 1 pass over the input set $E$, stores at most $O(\frac{k \log k}{\epsilon})$ elements and has $O(\frac{\log k}{\epsilon})$ update time per element.
    \end{itemize}
\end{proposition}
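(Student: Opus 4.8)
The plan is to prove the two bullets in turn, treating the $(\tfrac{1}{2}-\epsilon)$-approximation as the substantive claim and reading off the pass, space, and update-time bounds from the bookkeeping of \cref{alg:seive-streaming}. First I would fix an optimal set $O^\star$ with $f(O^\star)=OPT$ and normalize $f(\emptyset)=0$. Because the guesses $O=\{(1+\epsilon)^i\}$ form a geometric grid of ratio $(1+\epsilon)$ spanning $[\underline{OPT},\overline{OPT}]$ and $\underline{OPT}\le OPT\le\overline{OPT}$ by hypothesis, there is a ``good guess'' $v^\star\in O$ with $(1-\epsilon)\,OPT\le v^\star\le OPT$, using $\tfrac{1}{1+\epsilon}\ge 1-\epsilon$. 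The entire argument then targets the single solution $S_{v^\star}$: since the algorithm returns $\argmax_{S_v} f(S_v)$, any lower bound on $f(S_{v^\star})$ is inherited by the output, and the cardinality claim $|S|\le k$ is immediate from the guard $|S_v|\le k$ in the update rule.

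I would then split on whether $S_{v^\star}$ fills to size $k$. In the full case $|S_{v^\star}|=k$, I track the slack $v^\star/2-f(S)$ across the $k$ insertions: the acceptance test $f(e\mid S)\ge \frac{v^\star/2-f(S)}{k-|S|}$ gives the one-step contraction $v^\star/2-f(S\cup e)\le\big(v^\star/2-f(S)\big)\frac{k-|S|-1}{k-|S|}$, and telescoping from $f(\emptyset)=0$ yields $f(S_{v^\star})\ge v^\star/2\ge \tfrac{1}{2}(1-\epsilon)OPT\ge(\tfrac{1}{2}-\epsilon)OPT$.

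The non-full case $|S_{v^\star}|<k$ is the crux and the step I expect to be the main obstacle. The key observation is that the same contraction shows the threshold $\frac{v^\star/2-f(S)}{k-|S|}$ is non-increasing over the stream, hence never exceeds its initial value $v^\star/(2k)$. Now every $o\in O^\star\setminus S_{v^\star}$ was rejected at some prefix $S^{(o)}\subseteq S_{v^\star}$, so $f(o\mid S^{(o)})<v^\star/(2k)$, and by \cref{def:submodularity} with monotonicity $f(o\mid S_{v^\star})\le f(o\mid S^{(o)})<v^\star/(2k)$. Summing over the at most $k$ such elements and invoking submodularity of $f$, $OPT\le f(O^\star\cup S_{v^\star})\le f(S_{v^\star})+\sum_{o} f(o\mid S_{v^\star})<f(S_{v^\star})+v^\star/2$, so $f(S_{v^\star})>OPT-v^\star/2\ge OPT/2$ because $v^\star\le OPT$. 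Both cases therefore clear the $(\tfrac{1}{2}-\epsilon)OPT$ bar.

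For the complexity bullet, the single outer loop over $i\in\{1,\dots,|E|\}$ gives exactly one pass. The guess count is $|O|=O\!\big(\epsilon^{-1}\log(\overline{OPT}/\underline{OPT})\big)$; under the standard bounds $\underline{OPT}=\max_e f(e)$ and $\overline{OPT}=k\max_e f(e)$, which hold since $\max_e f(e)\le OPT\le k\max_e f(e)$ by monotonicity and submodularity, the ratio is $k$ and $|O|=O(\epsilon^{-1}\log k)$. Each guess stores at most $k$ elements, giving total storage $O(\epsilon^{-1}k\log k)$, while processing one stream element costs one threshold evaluation per guess, i.e. $O(\epsilon^{-1}\log k)$ update time. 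Besides the non-full case, the only delicate point I anticipate is the boundary of the good-guess argument when $OPT$ sits near $\underline{OPT}$, which I would handle by taking $v^\star=\underline{OPT}$ whenever $OPT<(1+\epsilon)\underline{OPT}$.
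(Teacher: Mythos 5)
Your argument is correct, but it is worth noting that the paper does not actually prove this proposition at all: its entire ``proof'' is a one-sentence citation identifying the statement as a sub-case of Proposition~5.2 of Badanidiyuru et al.\ with \emph{a priori} $OPT$ bounds. What you have written is, in effect, a reconstruction of that reference's analysis rather than an alternative to anything in this paper. Your reconstruction is sound: the slack contraction $v^\star/2 - f(S\cup e)\le (v^\star/2-f(S))\frac{k-|S|-1}{k-|S|}$ follows directly from the acceptance test, telescopes to $f(S_{v^\star})\ge v^\star/2$ in the full case (the product of the contraction factors vanishes at the $k$-th insertion), and also shows the threshold is non-increasing, which is exactly what the non-full case needs so that every rejected $o\in O^\star\setminus S_{v^\star}$ satisfies $f(o\mid S_{v^\star})\le f(o\mid S^{(o)})< v^\star/(2k)$ and the standard bound $OPT\le f(S_{v^\star})+\sum_o f(o\mid S_{v^\star})$ closes the argument. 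The one genuine subtlety you correctly flag is the existence of a good guess: as the guess set is written on line~2 of the algorithm, $O$ can fail to contain any point of $[(1-\epsilon)OPT,\,OPT]$ when $OPT$ is close to $\underline{OPT}$ (in the degenerate case $\underline{OPT}=\overline{OPT}=OPT$ not a power of $1+\epsilon$, $O$ is empty), and your patch of adjoining $\underline{OPT}$ as a guess is the standard fix; the paper inherits this technicality silently from the reference. Your complexity accounting is also right, with the caveat you note that the $O(\frac{k\log k}{\epsilon})$ storage and $O(\frac{\log k}{\epsilon})$ update bounds presume $\overline{OPT}/\underline{OPT}=O(k)$, which holds for the standard singleton-based bounds but is an assumption on the supplied \emph{a priori} bounds here.
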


\begin{proof}
    \cref{prop:sieve-streaming} is a sub-case of Proposition 5.2 from \cite{badanidiyuru2014streaming} with \emph{a priori} $OPT$ bounds.
\end{proof}

Note that the for loop on lines 9-13 of \cref{alg:seive-streaming} evaluates the marginal value of each element with respect to all solutions.
This step can be conducted in parallel to dramatically improve computation times.
Also note that the suboptimality bound in \cref{prop:sieve-streaming} does not require the evaluation of all elements, i.e., if all solutions reach the cardinality constraint the algorithm may terminate early.
\section{Optimal Geometric Map Distillation}
\label{sec:CEBC}

Geometric map distillation is a combinatorial optimization problem which can be formulated as
\begin{equation}
\label{eq:general_formulation}
    \max_{S \subseteq E, |S| \leq k} f(S), 
\end{equation}
where the input set $E$ represents the set of LiDAR scans collected in a mapping session.
In this work, we constrain solutions to be sets of scans in order to reduce the dimensionality of \cref{eq:general_formulation}.
The remainder of this section will define a function for geometric map \textit{informativeness}, i.e., $f(S)$ in \cref{eq:general_formulation}, which generates representative and diverse solutions when optimized.
``Representative'' and ``diverse'' are terms borrowed from \cite{lin2011class}, where representative solutions in the context of geometric map distillation are those that capture the most unique geometric information, and diverse solutions are those that capture as many different environments from the input set as possible.

The first step in formulating informativeness is to define mutual information, or the measure of redundant information between set elements for point clouds.
Using the defined mutual information metric, two submodular function classes for map distillation are defined: coverage functions, which minimize the mutual information between elements of the solution, and clustering functions, which maximize the mutual information between the solution and input sets.
We argue that clustering functions are better suited to geometric map distillation and develop a novel submodular clustering function called Continuous Exemplar-Based Clustering.
This function treats a mapping session as a continuous trajectory in order to correct unwanted bias towards repeated scans and save computation time.

\subsection{Point Cloud Mutual Information}
A natural metric for defining the mutual information of two point clouds $\mathcal{P}_{i}$ and $\mathcal{P}_{j}$ is overlap,  or the fraction of points in $\mathcal{P}_{i}$ that are sufficiently close to any point in $\mathcal{P}_{j}$.
Overlap is a useful metric for evaluating the similarity between two elements (i.e., scans), but finding overlap directly is  computationally expensive.
Instead, functions that measure (directly or indirectly) overlap are desired.

Learned descriptors can be trained to encode overlap using simple vector operations \cite{chen2022overlapnet}.
Descriptors in this work are unit-length $(n+1)-$dimension vectors that lie on the $n-$hypersphere denoted as $\mathbb{S}^{n}$.
We use $\phi(\mathcal{P}_{i})$ to denote the descriptor generated by point cloud $\mathcal{P}_{i}$.
The Euclidean distance between descriptors can be used to define point cloud similarity as
\begin{equation}
\label{eq:desc_mutual_information}
    \mathcal{I}(\mathcal{P}_{i}, \mathcal{P}_{j}) = ||\phi(\mathcal{P}_{i})-\phi(\mathcal{P}_{j})||_{2}.
\end{equation}
The descriptor neural network is trained such that two point clouds with more overlap have less distance between their descriptors.
Hence, we optimize geometric map distillation informativeness using descriptors and define the input set $E$ for \cref{eq:general_formulation} as a sequential (i.e., ordered) set of descriptors denoted as $E:=\{e_{1}, e_{2}, ..., e_{|E|}\}$ where $e_{i} = \phi(\mathcal{P}_{i})$ and $E \subseteq \mathbb{S}^{n}$.

\subsection{Submodular Map Coverage vs. Clustering}
\label{sec:coverage_vs_clustering}
We identify two possible function classes for geometric map distillation using learned descriptors as a measure of scan-to-scan mutual information.
We refer to these function classes as coverage and clustering using similar concepts from previous submodular summarization work \cite{lin2011class}. 
The goal of this subsection is to provide elementary coverage and clustering functions, prove their submodularity, describe the intuition and differences for each, and ultimately decide which class best captures the desired qualities of informativeness for geometric map distillation.


Coverage functions minimize mutual information between the elements of the solution set. 
To define a simple coverage function using descriptors, we formulate an area coverage problem over $\mathbb{S}^n$.
Consider each descriptor as the center of a hyperspherical cap with unit radius\footnote{A hyperspherical cap is the set of points on the surface of a hypersphere which are all within a fixed distance from the center of the cap. Interested readers can find formal definitions of spherical caps and their areas in} \cite{li2010concise}., where the cap with center $e$ is notated $Cap(e) \subset \mathbb{S}^n$.
We define an elementary coverage function as the area of the union of the caps
\begin{equation}
\label{eq:coverage_desc_simple}
    f_{cov}(S) = \mathcal{A}(\cup_{e_{i} \in S} Cap(e_{i})),
\end{equation}
where $\mathcal{A}(\cdot)$ is the surface area of a subset of $\mathbb{S}^n$.
Note that coverage encourages diverse solutions because high marginal value elements are those which are distant from other elements of the solution.
The primary advantage of coverage functions is that the evaluation complexity only scales with the size of the solution set which is often much smaller than the input set.
\Cref{eq:coverage_desc_simple} is proven monotone non-decreasing submodular in a similar manner to many other submodular area coverage functions (e.g., \cite{ramesh2024approximate} Proposition 2).

Clustering functions maximize the mutual information between the solution and input sets.
To define a simple clustering function using descriptors we use the k-mediod loss \cite{kaufman2009finding}
\begin{equation}
\label{eq:EBC_loss}
    L_{ebc}(S) = \frac{1}{|E|} \sum_{i=1}^{|E|} d(e_{i},S),
\end{equation}
where $d(e_{i},S)$ is the distance from $e_{i}$ to the nearest element of $S$.
K-mediod loss is used to define the exemplar-based clustering function 
\begin{equation}
\label{eq:EBC}
    f_{ebc}(S) = L_{ebc}(\mathbf{0}) - L_{ebc}(S\cup \mathbf{0}).
\end{equation}
Exemplar-based clustering is proven monotone non-decreasing, submodular in \cite{kaufman2009finding} given an appropriate null element $\mathbf{0}$.
The origin of the hypersphere $\mathbf{0} = [0,0,...,0] \in \mathbb{R}^{256}$ can be used in this case as it is a uniform distance of $1$ from all points on the sphere, so $L(\mathbf{0}) = 1$ and $f(\emptyset) = 0$.
Clustering functions identify both representative and diverse solution elements because an element must be both distinct from other solution elements and similar to elements of the input set in order to have a high marginal value.
However, this means evaluation complexity for clustering functions scales with the size of the input set and is generally more expensive than coverage functions.

\begin{figure}[t!]
    \centering
    \includegraphics[width=0.48\textwidth]{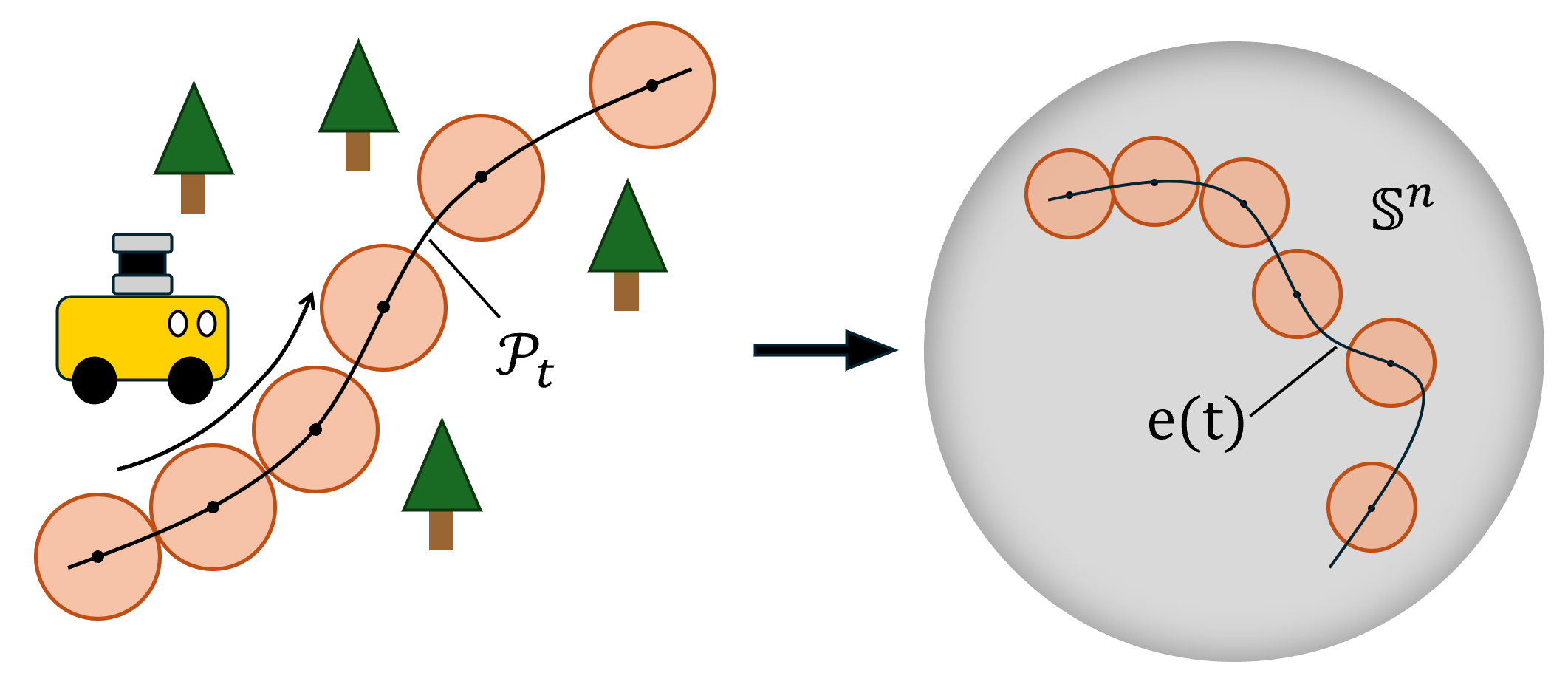}
    \caption{The descriptors of LiDAR point clouds collected along a time-parameterized trajectory are assumed to be samples along a corresponding trajectory in $\mathbb{S}^{n}$. Point clouds and therefore descriptors are assumed to be sampled with unequal distances.}
    \label{fig:assumption1}
    \vskip -0.15in
\end{figure}

Despite the expected difference in evaluation times, we argue that clustering functions are better suited for geometric map distillation because they reward both representativeness and diversity.
Although solution representativeness is difficult to directly quantify, it can be measured by the robustness of a function or how well it performs in the presence of unexpected outliers.
By rewarding both these properties, clustering functions can be expected to robustly select for high quality distilled maps (e.g., see \cref{subsec:CEBC_result}).
The primary disadvantage of clustering functions is that the evaluation time scales with the size of the input set.
However, we can derive a novel clustering function to mitigate this as described in the following subsection.

\subsection{Continuous Exemplar-Based Clustering}

Exemplar-based clustering \cref{eq:EBC} quantifies the average distance between all input set elements and the solution set.
However, LiDAR datasets often contain dense clusters of repeated scans (e.g., when the sensor is stationary or moving slowly), which biases exemplar-based clustering toward over-represented regions.
For example, if 50$\%$ of scans are captured before the sensor moves, half the clustering loss is determined by that single repeated scan.
However, it is possible to quantify the similarity between sequential scans in a mapping session and account for repeated scans via descriptors.

We propose a novel variant of exemplar-based clustering called Continuous Exemplar-Based Clustering (CEBC) which finds the unbiased clustering solution for a continuous mapping trajectory.
The intuition for CEBC is that geometric perception data is unique for a given pose which changes continuously in a trajectory, so LiDAR scans and descriptors can be treated as discrete measurements from a continuous trajectory.
Integrating along this trajectory should be a more accurate measure of the information captured in a mapping session, because sequentially similar samples will contribute less to the overall reward function value.
The notion that descriptors are discrete samples from a continuous trajectory is formalized by the following assumption.

\begin{figure}[t!]
    \centering
    \includegraphics[width=0.48\textwidth]{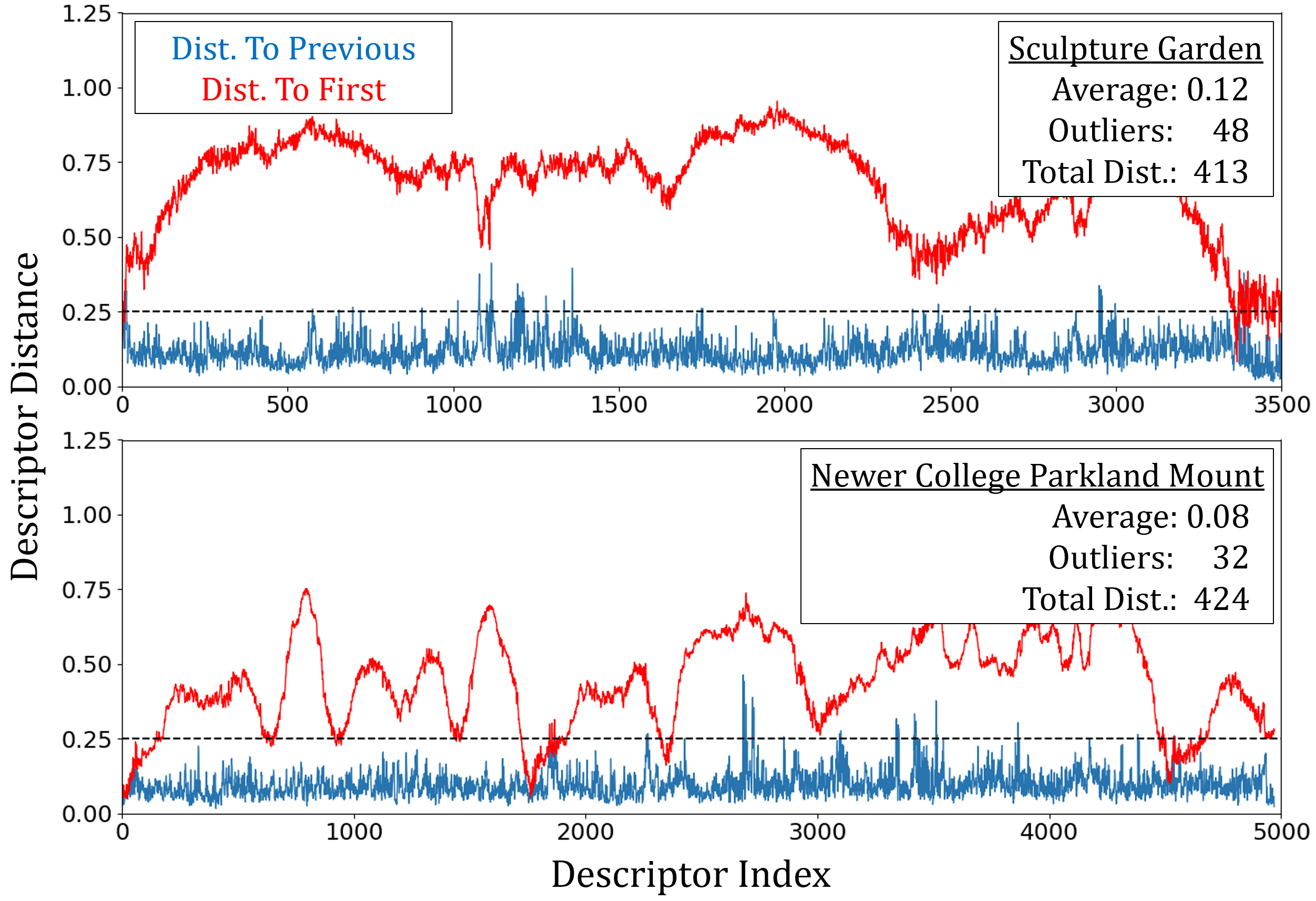}
    \caption{Descriptor distance between sequential descriptors in sculpture garden (top) and Newer College Dataset \cite{ramezani2020newer} Parkland Mount (bottom) datasets. Outliers is the number of sequential distances $>0.25$ and total distance is the sum of all sequential distances. The Euclidean distance to the first descriptor is shown in red as a reference.}
    \label{fig:assumption1-context}
    \vskip -0.2in
\end{figure}

\begin{assumption}
    Point cloud descriptors are assumed to be discrete samples from a \emph{continuous,} time-parameterized trajectory which we notate as $e(t): \mathbb{R} \rightarrow \mathbb{S}^{n} \, , \, t_{0} \leq t \leq t_{1}$.
    By continuity, we assume descriptors collected at a fixed frequency in the discrete input set are close to each other such that $||e_{i} - e_{i-1}||_{2} \leq \sigma$ for all $i$ where $2 \leq i \leq |E|$.
\label{assump:traj}
\end{assumption}

\Cref{assump:traj} comes from the fact that neural networks are smooth function approximators \cite{hornik1989multilayer}\cite[Chapter~6.6]{bengio2017deep} (\cref{fig:assumption1-context}).
In this setting, we assume collected LiDAR scans change gradually as the sensor moves through an environment.
With \Cref{assump:traj}, the CEBC loss term is then formulated as an integral over this trajectory
\begin{equation}
\label{eq:CEBC_loss}
    L_{cebc}(S) = \frac{1}{t_{1}-t_{0}}\int_{t_{0}}^{t_{1}} d(e(t),S) dt.
\end{equation}

\Cref{eq:CEBC_loss} can be transformed into a submodular function similarly to exemplar-based clustering using an appropriate null element.
This results in the formal definition of CEBC
\begin{equation}
\label{eq:CEBC}
    \Gamma(S) = L_{cebc}(\mathbf{0}) - L_{cebc}(S \cup \mathbf{0}),
\end{equation}
where $\mathbf{0} = [0,0,...,0] \in \mathbb{R}^{256}$ is the origin again.
Similar to \cref{eq:EBC}, because the origin is a unit distance from all points on $\mathbb{S}^{n}$, $L_{cebc}(\mathbf{0}) = 1$ and $\Gamma(\emptyset) = 0$.
\Cref{eq:CEBC} is hence our proposed optimization objective in \cref{eq:general_formulation} and still a set function as the solution is selected from the sampled descriptors.

\begin{proposition}
\label{prop:CEBC_monandsub}
    Let $\mathbb{S}^{n}$ be the set of all points on an n-hypersphere and $2^{\mathbb{S}^{n}}$ be its power set.
    The set function  $\Gamma: 2^{\mathbb{S}^{n}} \rightarrow \mathbb{R}$ from \cref{eq:CEBC} is a monotone non-decreasing, submodular function.
\end{proposition}
\begin{proof}
    We will prove that \cref{eq:CEBC} is monotone non-decreasing first, then prove submodularity.
    Recalling we use the notation $d(a,B) = \min_{b \in B} ||a - b||_{2}$, it is clear the loss term \cref{eq:CEBC_loss} is monotone non-increasing because of the min operator.
    Because the first term of \cref{eq:CEBC} is constant and the second term is the negative of a monotone non-increasing function, \cref{eq:CEBC} is therefore monotone non-decreasing.

    To prove submodularity via \cref{def:submodularity}, we must show that $\Gamma(x|A) \geq \Gamma(x|B)$ for sets $A \subseteq B \subseteq \mathbb{S}^{n}$ and $x \in \mathbb{S}^{n} / B$.
    By convention $\mathbf{0} \in A,B$ because it is appended to the second term in \cref{eq:CEBC}.
    Consider an arbitrary point $s \in \mathbb{S}^{n}$.
    We know that $d(s,A) \geq d(s,A \cup x)$ and $d(s,A) \geq d(s,B)$ because the minimum distance is non-increasing and $A$ is a subset of both $A \cup x$ and $B$.
    If $s$ is closer (or equidistant) to $B$ than $x$ then $d(s,B \cup x) = d(s,B)$.
    Then, using $d(s,A) \geq d(s,A \cup x)$ and $d(s,B \cup x) = d(s,B)$, one can show
    \begin{equation}
    \label{eq:prop2_step1}
        d(s,A) - d(s,A \cup x) \geq d(s,B) - d(s,B \cup x),
    \end{equation}
    where the right hand side of the inequality is just zero.
    
    If $s$ is instead closer to $x$ than any element in $B$, then $d(s,B \cup x) = d(s,\{x\})$.
    It also implies that $s$ is closer to $x$ than any element in $A$ and $d(s,A \cup x) = d(s,\{x\})$.
    Because $d(s,A) \geq d(s,B)$, subtracting $d(s,\{x\})$ from both sides and using $d(s, B \cup x) = d(s,A \cup x) = d(s,\{x\})$, again gives \cref{eq:prop2_step1}.
    Hence, in either case, \cref{eq:prop2_step1} must hold.
    Furthermore, since $s$ is arbitrary, we can integrate \cref{eq:prop2_step1} to get the integral inequality
    \begin{align*}
        \int_{t_{0}}^{t_{1}} & \Bigl( d(e(t),A) - d(e(t),A \cup x) \Bigr) dt \hfill \\
        & \geq \int_{t_{0}}^{t_{1}} \Bigl(d(e(t),B) - d(x(t),B \cup x) \Bigr) dt,
    \end{align*}
    By adding and subtracting $\int_{t_{0}}^{t_{1}}d(x(t),\mathbf{0})dt$ to both sides and dividing both sides by $t_{1} - t_{0}$, we can substitute \cref{eq:CEBC_loss} to get
    \begin{align*}
        L_{cebc}(\mathbf{0}) - L_{cebc}(A \cup e) &- L_{cebc}(\mathbf{0}) + L_{cebc}(A) \hfill\\
        \geq L_{cebc}(\mathbf{0}) - L_{cebc}(B &\cup e) - L_{cebc}(\mathbf{0}) + L_{cebc}(B),
    \end{align*}
    which yields $\Gamma(e|A) \geq \Gamma(e|B)$, as desired.
\end{proof}

Although \cref{eq:CEBC} theoretically addresses the problem of over-sampling in LiDAR datasets, the integral in \cref{eq:CEBC_loss} is impossible to evaluate because we do not have access to $e(t)$ as a continuous function.
Instead, \cref{eq:CEBC} uses a discrete approximation of the path integral in \cref{eq:CEBC_loss} when optimized for \cref{eq:general_formulation}
\begin{align}
\label{eq:CEBC_disc}
    L_{disc}(S)& =  
    \frac{1}{d_{tot}} \sum_{i=1}^{|E|} w_{i} d(e_{i},S),
\end{align}
where $L_{disc}$ indicates the discrete CEBC loss, $d_{tot}$ is the total path length and, $w_{i}$ is the weight for element $e_{i}$.
The discrete approximation of a path integral requires weighting each sample point by the path length to neighboring sample points\footnote{There are many methods for discretely approximating continuous integrals, here we rely on a simple first-order method.}.
We can approximate this element weighting using the distance between sequential elements so that $w_{i} = ||e_{i} - e_{i-1}||_{2}$.
The total path length is then approximated as $d_{tot} = \sum w_{i}$.

A property of \cref{eq:CEBC} is that sequentially identical elements (e.g., LiDAR scans taken from the same position) will have low weights and therefore reduce the bias towards over represented scans.
Because clustering function evaluation complexity scales with the size of the input set, we can also use this property to reduce computation time by carefully eliminating redundant scans.
This is formalized by constructing a \emph{reduced set} (denoted $R$ where $R \subseteq E$) which omits such elements. 
\Cref{alg:reduced_set_selection} describes the procedure for selecting this reduced input set.
Hence for evaluating \cref{eq:CEBC_disc}, $R$ and $E$ can be used interchangeably.
Therefore, the goal of the remainder of this section is to show that the value of CEBC does not change much when evaluated using a full or reduced set, and that any solution composed of elements from the full set has a close solution composed of elements from the reduced set.

\begin{figure}[t!]
    \centering
    \includegraphics[width=0.45\textwidth]{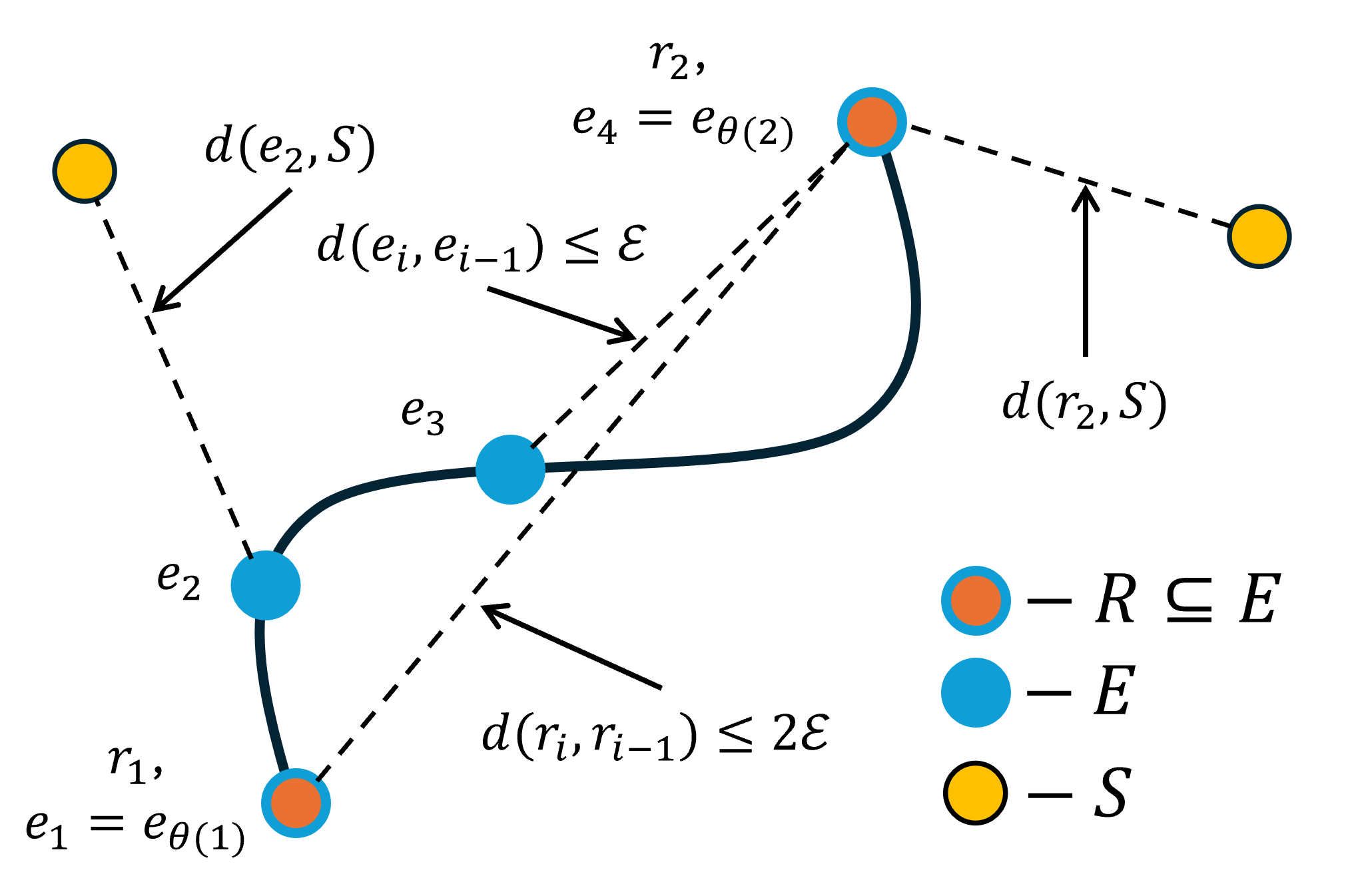}
    \caption{Section of a full and reduced set generated by \cref{alg:reduced_set_selection}. All elements in the full set (blue outlines) are no further than $\mathcal{E}$ apart, and all elements of the reduced set (orange interior) are no further than $2\mathcal{E}$ apart. The weight assigned to $r_{2}$ is the sum of weights from its source set $\hat{w}_{2} = w_{2}+w_{3}+w_{4}$. The closest element of $S$ to $e_{2}$ and $r_{2}$ does not have to be the same element of the solution $S$.}
    \label{fig:reduced_set_visual}
    \vskip -0.0in
\end{figure}

\begin{algorithm}[t]
\caption{Reduced Input Set Selection}
\label{alg:reduced_set_selection}
\begin{algorithmic}[1]
    \renewcommand{\algorithmicrequire}{\textbf{Given:}}
    \renewcommand{\algorithmicensure}{\textbf{Output:}}
    \REQUIRE $E, \mathcal{E} > \sigma$
    \ENSURE $R$
    \STATE $R = \{e_{1}\}, \hat{W} = \{0\}, \, w = 0$
    \FOR {$i=\{2,3,...,|E|\}$}
        \STATE $w = w + ||e_{i}-e_{i-1}||_{2}$
        \IF {$w \geq \mathcal{E}$}
            \STATE $R = R \cup \{e_{i}\}$
            \STATE $\hat{W} = \hat{W} \cup \{w\}$
            \STATE $w = 0$
        \ENDIF
    \ENDFOR
    \STATE $R = R \cup \{e_{|E|}\}$
    \STATE $\hat{W} = \hat{W} \cup \{w\}$
\end{algorithmic}
\end{algorithm}

\cref{alg:reduced_set_selection} gives our method for selecting a reduced input set with individual elements $r_{i} \in R$.
The elements of the reduced set are selected such that they all have at least $\mathcal{E}$ distance between sequential elements.
We assume that when building a reduced input set, an injective mapping function $\theta: \mathbb{Z} \rightarrow \mathbb{Z}$ is saved which translates indices from the reduced set to the corresponding index in the full set such that $r_{i} = e_{\theta(i)}$.
It will thus be useful to formalize the relationship between reduced set elements and the full set elements which were omitted.
We call the set of elements $\{e_{j} | j \in [\theta(i-1)+1, \theta(i-1)+2, ..., \theta(i)]\}$ the source set for $r_{i}$ and denote the set of indices for the $i$'th source set as $\eta_{i} = [\theta(i-1)+1, \theta(i-1)+2, ..., \theta(i)]$.
Elements in $R$ are assigned a weighting (weights for the reduced set are denoted as $\hat{w}$) for the purpose of evaluating \cref{eq:CEBC_disc} which is equal to the sum of weights in their source set, i.e., $\hat{w}_{i} = \sum_{j\in \eta_{i}}w_{j}$.
Intuitively, an element in the reduced set combines the value of all the elements of its source set.
In the following lemma, we prove that any reduced set element and its source set are approximately the same distance to any solution.
This follows under the assumption that all input set elements are close by continuity, i.e., $||e_{i} - e_{i-1}||_{2} < \sigma$.

\begin{lemma}
\label{lem:triangle}
    Consider an input set of elements $E$ and corresponding reduced set $R$ built using \cref{alg:reduced_set_selection} with $\mathcal{E} \geq \sigma$.
    For any element $r_{i}$ and any element from the corresponding source set $e_{j}, j \in \eta_{i}$, then $|d(r_{i},S)-d(e_{j},S)| \leq 2\mathcal{E}$.
\end{lemma}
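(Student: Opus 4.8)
The plan is to reduce the claim to two ingredients and then chain them: a purely geometric bound on the Euclidean distance $\|r_i - e_j\|_2$ between a reduced-set element and any element of its source set, and the fact that the nearest-point distance $d(\cdot,S)$ is a nonexpansive (1-Lipschitz) function of its first argument. Together these immediately give the stated $2\mathcal{E}$ bound.

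First I would bound $\|r_i - e_j\|_2$. Since $r_i = e_{\theta(i)}$ and $j \in \eta_i = [\theta(i-1)+1, \ldots, \theta(i)]$, both indices lie in the same source segment, so I can walk from $e_j$ to $e_{\theta(i)}$ along consecutive samples and apply the triangle inequality,
\[
\|e_j - r_i\|_2 = \|e_j - e_{\theta(i)}\|_2 \le \sum_{l=j+1}^{\theta(i)} \|e_l - e_{l-1}\|_2 = \sum_{l=j+1}^{\theta(i)} w_l \le \hat{w}_i .
\]
The last sum is a subsum (over a subset of $\eta_i$, with all $w_l \ge 0$) of the source-set weight $\hat{w}_i = \sum_{l \in \eta_i} w_l$, and the bound $\hat{w}_i \le 2\mathcal{E}$ was already established immediately before the lemma, following from the threshold test of \cref{alg:reduced_set_selection} together with \cref{assump:traj} and the hypothesis $\mathcal{E} \ge \sigma$. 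Hence $\|r_i - e_j\|_2 \le 2\mathcal{E}$.

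Second, I would invoke nonexpansiveness of the distance-to-a-set map. For any points $x,y$ and any $v \in S$, the triangle inequality gives $d(x,S) \le \|x-v\|_2 \le \|x-y\|_2 + \|y-v\|_2$; minimizing the right-hand side over $v \in S$ yields $d(x,S) \le \|x-y\|_2 + d(y,S)$, and swapping $x$ and $y$ gives $|d(x,S) - d(y,S)| \le \|x-y\|_2$. Setting $x = r_i$ and $y = e_j$ and combining with the first step produces $|d(r_i,S) - d(e_j,S)| \le \|r_i - e_j\|_2 \le 2\mathcal{E}$, which is the claim.

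The subtlety worth flagging — and the reason the Lipschitz step is essential rather than cosmetic — is exactly what the caption of \cref{fig:reduced_set_visual} warns about: $r_i$ and $e_j$ need not share the same nearest neighbor in $S$, so one cannot bound the two distances by comparing both to a single common solution element. Nonexpansiveness of $d(\cdot,S)$ sidesteps this by absorbing the mismatch entirely into the $\|r_i - e_j\|_2$ term. Everything else is a routine telescoping of the triangle inequality, so I expect the only real care to be in correctly identifying the relevant subsum of weights and in citing the $\hat{w}_i \le 2\mathcal{E}$ bound rather than re-deriving it.
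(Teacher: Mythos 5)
Your proof is correct and follows essentially the same route as the paper's: the paper likewise starts from the bound $d(r_i,e_j)\leq 2\mathcal{E}$ (asserted from the preceding $\hat{w}_i\leq 2\mathcal{E}$ discussion) and then applies the triangle inequality through the respective nearest solution elements $u$ and $v$, which is exactly your nonexpansiveness argument for $d(\cdot,S)$ written out concretely. Your version is slightly more explicit about telescoping the weights to justify $\|r_i-e_j\|_2\leq 2\mathcal{E}$ and about the symmetry needed for the absolute value, but there is no substantive difference in approach.
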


\begin{proof}
    The distance between $r_{i}$ and its source set is bounded by $d(r_{i},e_{j}) \leq 2\mathcal{E}$, $\forall j \in \eta_{i}$ because in the worst case $d(r_{0},e_{\theta(0)+1}) = 0$, $d(e_{\theta(0)+1},e_{\theta(0)+2}) = \mathcal{E} - \delta$, and $d(r_{1},e_{\theta(0)+1}) = \mathcal{E} - \delta$ where $0 < \delta << 1$.
    Consider the elements in $S$ closest to $r_{i}$ and $e_{j}$ and denote them as $v$ and $u$ respectively.
    Note that $v$ and $u$ do not need to be the same element as shown in \cref{fig:reduced_set_visual}.
    Then the distance from $r_{i}$ to $S$ can be related to $e_{j}$ by $d(r_{i},v) \leq d(r_{i},u) \leq d(r_{i},e_{j})+d(e_{j},u) \leq 2\mathcal{E} + d(e_{j},u)$.
    This simplifies to $d(r_{i},v) \leq d(e_{j},u) + 2\mathcal{E}$, which can be manipulated into $|d(r_{i},S)-d(e_{j},S)| \leq 2\mathcal{E}$.
\end{proof}
\vskip -0.02in

In the following theorem, we prove that the value of a solution is arbitrarily similar for \cref{eq:CEBC} when using $E$ and $R$.

\begin{theorem}
\label{theorem:similarity_of_scores}
    Consider an input set of elements $E$ and corresponding reduced set $R$ built using \cref{alg:reduced_set_selection} with $\mathcal{E} \geq \sigma$.
    Let the submodular function $\Gamma_{E}(S) = L_{disc}(\mathbf{0}) - L_{disc}(S \cup \mathbf{0})$ indicate the input set (either $E$ or $R$) by the subscript.
    Then, for any solution $S$, we have $|\Gamma_{E}(S) - \Gamma_{R}(S)| \leq 2\mathcal{E}$.
\end{theorem}

\begin{proof}
    The proof follows from \cref{lem:triangle} as the value of each source set can be associated with the corresponding element of the reduced set. 
    Using \cref{eq:CEBC_disc}, we have
    \begin{equation*}
        |\Gamma_{E}(S) - \Gamma_{R}(S)|  = \frac{1}{d_{tot}} \Bigl| \sum_{i=1}^{|E|}w_{i} d(e_{i},S) - \sum_{i=1}^{|R|} \hat{w}_{i} d(r_{i},S) \Bigr|. 
    \end{equation*}
    Using $\sum_{i=1}^{|E|}(\cdot) = \sum_{i=1}^{|R|} \sum_{j \in \eta_i} (\cdot)$, 
    \begin{equation*}
        |\Gamma_{E}(S) - \Gamma_{R}(S)|  = \frac{1}{d_{tot}} \Bigl|\sum_{i=1}^{|R|} \sum_{j \in \eta_{i}} w_{j} d(e_{j},S) - \hat{w}_{i} d(r_{i},S)\Bigr| 
    \end{equation*}
    Factoring out $\hat{w}_{i}=\sum_{j \in \eta_{i}} w_{j}$,
    \begin{equation*}
        |\Gamma_{E}(S) - \Gamma_{R}(S)|  = \frac{1}{d_{tot}} \Bigl|\sum_{i=1}^{|R|}\sum_{j \in \eta_{i}}w_{j}(d(e_{j},S) - d(r_{i},S))\Bigr|. 
    \end{equation*}
    Finally, applying \cref{lem:triangle}, we get the desired inequality
    \begin{align*}
        |\Gamma_{E}(S) - \Gamma_{R}(S)| & \leq \frac{1}{d_{tot}} \Bigl|\sum_{i=1}^{|R|}\sum_{j \in \eta_{i}} 2 w_{j}\mathcal{E}\Bigr|  \\ 
        & = \frac{2\mathcal{E}}{d_{tot}} \sum_{i=1}^{|R|}\hat{w}_{i} \\ 
        & = 2\mathcal{E}. \qedhere
    \end{align*}
\end{proof}

\begin{figure*}
    \centering
    \vspace{6pt}
    \includegraphics[width=0.96\textwidth]{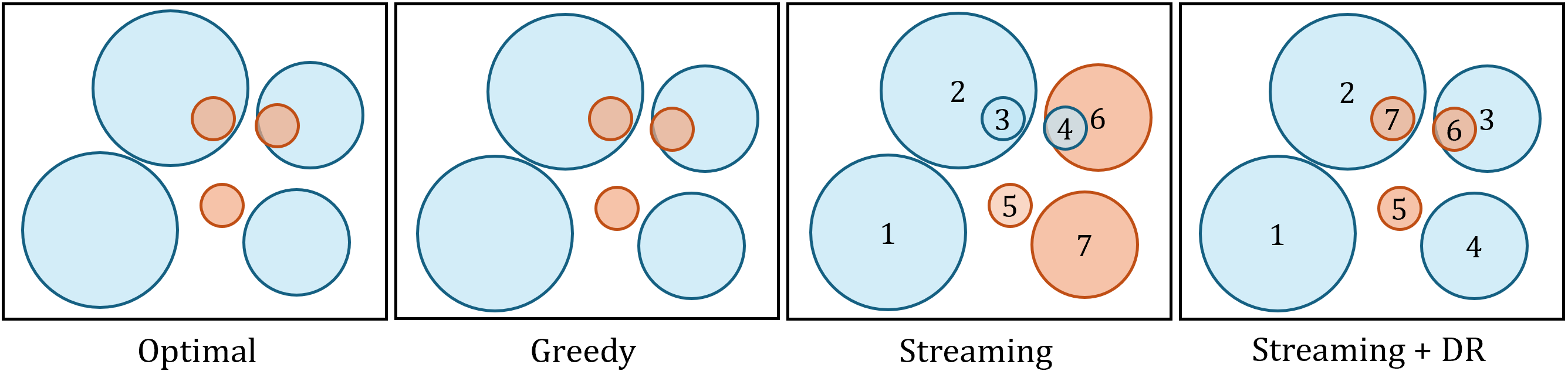}
    \caption{Sensor selection with a four sensor limit. Solution set in blue, other sensors in orange. From left to right: optimal solution, greedy solution which does not consider input order, streaming solution with arbitrary input order (indicated by numbers in sensor footprints), and the streaming solution with dynamic reordering (DR) which modifies the input order according to approximate element values. Sensors 3 and 4 selected in the streaming solution due to threshold saturation and opportunity cost bias respectively.} Assumed that $OPT$ is known \textit{a priori} and only one solution is maintained by the streaming variants.
    \label{fig:DR_cartoon}
    \vskip -0.2in
\end{figure*}

\Cref{theorem:similarity_of_scores} assumes no outliers (i.e., $||e_{i}-e_{i-1}||_{2} > \sigma$).
The following corollary demonstrates that the bound remains tight even in the presence of a small number of outliers.

\begin{corollary}
\label{cor:outliers}
    Given $N$ outlier descriptors where $||e_{i}-e_{i-1}||_{2} > \mathcal{E}$, for any $S$, $|\Gamma_{E}(S) - \Gamma_{R}(S)| \leq 2\mathcal{E} + \frac{2\mathcal{E}+4N}{d_{tot}}$.
\end{corollary}
\vskip -0.2in
\begin{proof}
    See \nameref{sec:appendix}.
\end{proof}

\cref{theorem:similarity_of_scores} and \cref{cor:outliers} establish that for any arbitrary solution, the value of \cref{eq:CEBC} evaluated on the full or reduced set is similar.
Next, we show that any solution selected from $E$ has a similar valued solution selected from $R$.

\begin{theorem}
\label{theorem:sim_sols}
    For any solution $S_{R} \subseteq R$ there exists a similar solution $S_{E} \subseteq E$ such that $\Gamma(S_{R}) \geq \Gamma(S_{E}) - \mathcal{E}$.
\end{theorem}

\begin{proof}
    By \cref{alg:reduced_set_selection}, any element of the full set has a corresponding element of the reduced set no further than $\mathcal{E}$ away.
    This means the elements of solution $S_{R}$ can each be shifted by up to $\mathcal{E}$ in order to find a corresponding solution $S_{E}$.
    Outliers do not need to be shifted because they are always included in both the full and reduced sets if they are further than $\mathcal{E}$ from the previous element.
    By optimally shifting each element of $S_{R}$ to $S_{E}$, each element of $E$ is now up to $\mathcal{E}$ closer to a solution element, i.e., $d(e_{i},S_{R} \cup \mathbf{0}) \leq d(e_{i},S_{E} \cup \mathbf{0}) + \mathcal{E}$.
    This results in a tight bound for the shifted solution value which is shown by using \cref{eq:CEBC} with the discrete loss \cref{eq:CEBC_disc}
    \begin{align*}
        \Gamma(S_{R}) =& L_{disc}(\mathbf{0}) - L_{disc}(S_{R} \cup \mathbf{0})\\
        \geq& 1-\frac{1}{d_{tot}}\sum_{i=1}^{|E|}w_{i}(d(e_{i},S_{E}\cup \mathbf{0})+\mathcal{E}).
    \end{align*}
    Separating the sum and using $\sum_{i=1}^{|E|}w_{i} = d_{tot}$,
    \begin{align*}
       \Gamma(S_{R}) = 1-\frac{1}{d_{tot}}\sum_{i=1}^{|E|}w_{i}(d(e_{i},S_{E}\cup \mathbf{0})) - \mathcal{E}.
    \end{align*}
    Simplifying yields the inequality $\Gamma(S_{R}) \geq \Gamma(S_{E})-\mathcal{E}$.
\end{proof}

Intuitively, \cref{theorem:sim_sols} says the average distance from an input set to solution set cannot be increased by more than $\mathcal{E}$ and thus any solution composed of elements from $E$ has an arbitrarily similar solution composed of elements from $R$.
To summarize, CEBC is well suited as a reward function for geometric map distillation because it selects the solution which best represents the input set, and avoids bias towards over-sampled positions.
Importantly, CEBC is submodular, meaning efficient algorithms for maximizing it with provably tight suboptimality guarantees exist.

\section{Dynamically Reordered Streaming Submodular Maximization}
\label{sec:dynamic_reordering}

Optimizing CEBC \cref{eq:CEBC} for geometric map distillation is computationally challenging because input sets can contain tens of thousands of scans.
However, \cref{eq:CEBC} is submodular, so multiple algorithms with polynomial-complexity are available to generate provably near-optimal solutions.
Streaming submodular maximization algorithms are well-suited for optimizing over large input datasets because they only require a single pass over the input set.
This is in contrast to the more popular greedy algorithm which requires $k$ passes to generate a solution of size $k$.
The downside of streaming is that elements are evaluated before the entire input set has been processed, so the algorithm must build solutions with limited information.
One such consequence of limited information is input order bias, or the increased likelihood of selecting elements due to the order of the stream.

This section presents a streaming submodular algorithm for geometric map distillation which efficiently addresses input order bias.
This is accomplished by using approximate marginal values of unchecked elements to dynamically reorder the stream such that the next element always has high expected value.
We begin by defining two forms of input order bias which degrade streaming algorithm performance relative to the greedy algorithm.
The dynamically reordered streaming submodular algorithm is then presented under the assumption that a suitable marginal value approximation function for CEBC exists.
We conclude the section by defining two functions for approximating the marginal value of \cref{eq:CEBC} which require $O(1)$ computations to evaluate.

\subsection{Input Order Bias in Streaming Submodular Maximization}

Input order bias is defined for streaming submodular maximization as the increased likelihood of solutions containing elements solely due to their position in the stream of the input set.
Bias is introduced to streaming algorithms because of the need to make decisions to include elements before evaluating the entire stream, i.e., using limited information. 
We identify two forms of input order bias which we refer to as opportunity cost and threshold saturation bias.

\textbf{Opportunity Cost Bias:} A naive approach to streaming submodular maximization would be to only admit elements with a marginal value of at least $OPT/k$.
If this approach managed to admit $k$ elements it is easy to show that the solution must be optimal, but \cite{badanidiyuru2014streaming} proves that such an approach fails for trivial scenarios.
Streaming algorithms must therefore rely on relaxed thresholds that admit suboptimal elements (e.g., \cref{eq:SS_thresh}) to achieve a near $1/2$-suboptimality bound.
This means that given two elements with similar value, the one seen earlier is more likely to be included in the generated solution because the solution might be full by the time it reaches the second element, or mutual information might lower the marginal value of the second element.

Opportunity cost bias is thus defined as the increased likelihood of streaming submodular algorithms to select elements that appear earlier in the stream \emph{due to relaxed admission thresholds}.
The loss of efficiency for streaming submodular maximization relative to the greedy algorithm due to opportunity cost bias is demonstrated in \cref{fig:DR_cartoon}.
In the third pane, the default streaming algorithm adopts an arbitrary stream order indicated by the numbers in the sensor footprints.
Although the 6th sensor in the stream covers more area than the 4th, it is not adopted in part because the 4th was considered earlier.

\textbf{Threshold Saturation Bias:} The relaxed admission threshold introduces another form of input order bias as it adjusts to the value of the current solution.
Recall that streaming submodular algorithms maintain multiple solutions in parallel, each with their own guess of the optimal solution value $v$.
When a solution value exceeds $v/2$, the admission threshold given by \cref{eq:SS_thresh} becomes negative, meaning that any evaluated element will be added to the solution after this point until the solution reaches the cardinality constraint.

Threshold saturation bias is thus the increased likelihood of selecting arbitrary or lesser elements as the admission threshold reaches $0$.
This effect is most pronounced when the threshold reaches $0$ early in the stream and with relatively few elements because a significant portion of the solution will be selected via random chance.
Threshold saturation bias is demonstrated in \cref{fig:DR_cartoon} where the threshold reaches $0$ after selecting the first two elements.
Once saturated, the remainder of the solution is determined solely by stream order and the 3rd sensor is chosen despite having zero marginal value.

\subsection{Streaming Submodular Dynamic Reordering}

We propose \emph{dynamic reordering} as our solution to input order bias in streaming submodular maximization for geometric map distillation.
The goal of dynamic reordering is to actively select stream order such that elements with high expected value are presented earlier and are therefore more likely to be included in solutions as a result of input order bias.
Expected marginal values, namely those obtained from an approximation of CEBC or heuristic denoted with $\hat{\Gamma}$, must be used because complete knowledge of the marginal value of each element would require evaluating the entire input set at least $k$ times, reducing it to a classical greedy approach.
Because each element may have a different marginal value with respect to each solution, \emph{ordering scores} are generated by taking the average expected marginal value for an element across all solutions.
Deciding which element to present next at each step in the stream introduces additional computational burden, but this can be mitigated by limiting the set of elements which can be searched over at each step.
The remainder of this section describes the process of dynamic reordering in greater detail.

\begin{algorithm}[t]
\caption{Dynamically Reordered (DR) Streaming Submodular Maximization}
\label{alg:dynamic-reordering}
\begin{algorithmic}[1]
    \renewcommand{\algorithmicrequire}{\textbf{Given:}}
    \renewcommand{\algorithmicensure}{\textbf{Output:}}
    \REQUIRE $E, \underline{OPT}, \overline{OPT}, k, F$ \\
    \ENSURE $\text{arg}\,\text{max}_{S_{v}} f(S_{v})$ \\

    \STATE {\tt\footnotesize\color{blue}// Create OPT guesses and solution sets}
    \STATE $O = \{(1+\epsilon)^{i} | i \in \mathbb{Z}, \underline{OPT} \leq (1+\epsilon)^{i} \leq \overline{OPT}\}$
    \FOR{$v \in \{1,2,...,|O|\}$}
        \STATE $S_{v} = \emptyset$
    \ENDFOR

    \STATE {\tt\footnotesize\color{blue}// Create partitions of indices and order scores}
    \STATE $\Theta_{S} \leftarrow ones(F*k)$
    \STATE $\Theta_{U} \leftarrow ones(|E|-F*k)$

    \STATE {\tt\footnotesize\color{blue}// Make single pass over the input set}
    \FOR{$a \in \{1,2,...,|E|\}$}
        \STATE $i = NextIndex(\Theta_{S}, \Theta_{U})$
        \STATE {\tt\footnotesize\color{blue}// Parallel evaluation step over all solutions}
        \FOR{$v \in \{1, 2, ..., |O|\}$}
            \STATE $ElementEvaluation(i, E, \Theta_{S}, \Theta_{U}, S_{v})$
        \ENDFOR
    \ENDFOR
\end{algorithmic}
\end{algorithm}

\begin{algorithm}[t]
\caption{Next Index}
\label{alg:nextind}
\begin{algorithmic}[1]
    \renewcommand{\algorithmicrequire}{\textbf{Given:}}
    \renewcommand{\algorithmicensure}{\textbf{Output:}}
    \REQUIRE $\Theta_{S}, \Theta_{U}$ \\
    \ENSURE $i$

    \STATE $i = \arg\max_{j}\Theta_{S}(e_{j})$
    \STATE $\Theta_{S}.remove(e_{i})$
    \STATE $\Theta_{S}.append(\Theta_{U}.first)$
    \STATE $\Theta_{U}.removeFirst()$
\end{algorithmic}
\end{algorithm}

\begin{algorithm}[t]
\caption{Element Evaluation}
\label{alg:DR_step}
\begin{algorithmic}[1]
    \renewcommand{\algorithmicrequire}{\textbf{Given:}}
    \renewcommand{\algorithmicensure}{\textbf{Output:}}
    \REQUIRE $i, E, \Theta_{S}, \Theta_{U}, S_{v}$ \\

    \STATE $\Theta_{S,n} = \Theta_{S}, \Theta_{U,n} = \Theta_{U}$
    \STATE $\Gamma(e_{i}|S_{v}) = 0$
    \FOR {$j \in \{1, 2, ..., |E|\}$}
        \STATE $d = ||e_{i} - e_{j}||_{2}$
        \IF {$d < d(e_{j},S_{v})$}
            \STATE $\Gamma(e_{i}|S_{v}) += \frac{w_{j}}{d_{tot}}(d(e_{j},S_{v}) - d)$
            \IF {$e_{j} \in \Theta_{s}$}
                \STATE $\Theta_{S,n}(e_{j}) += \frac{1}{|O|}(\hat{\Gamma}(d) - \hat{\Gamma}(d(e_{j},S_{v})))$
            \ELSIF {$e_{j} \in \Theta_{u}$}
                \STATE $\Theta_{U,n}(e_{j}) += \frac{1}{|O|}(\hat{\Gamma}(d) - \hat{\Gamma}(d(e_{j},S_{v})))$
            \ENDIF
        \ENDIF
    \ENDFOR

    \IF {$\Gamma(e_{i}|S_{v}) \geq \frac{v/2 - f(S_{V})}{k-|S_{v}|} \& |S_{v}| \leq k$}
        \STATE $S_{v} = S_{v} \cup e_{i}$
        \STATE $\Theta_{S} = \Theta_{S,n}, \Theta_{U} = \Theta_{U,n}$
    \ENDIF
\end{algorithmic}
\end{algorithm}

The dynamically reordered streaming submodular algorithm for geometric map distillation is given by \cref{alg:dynamic-reordering}.
Intuitively, \cref{alg:dynamic-reordering} closely mirrors Sieve-Streaming (\cref{alg:seive-streaming}) with additional steps for efficiently tracking and updating the score of CEBC and the ordering scores used to select stream order.
Multiple solutions are initialized on lines 2-5 identically to Sieve-Streaming.
Lines 7-8 include the first deviation, where a sorted and unsorted set are initialized.
Ordering scores are kept in two sets, the sorted set $\Theta_{s}$ and unsorted set $\Theta_{u}$.
The ordering score for element $e_{i}$ is denoted as $\Theta_{s}(e_{i})$ or $\Theta_{u}(e_{i})$ depending on which set it belongs to.
We note that the input set order is randomized before optimization.
The single evaluation pass occurs on lines 10-16, where details of key functions are left to their own algorithm blocks.

\Cref{alg:nextind} returns the index of the highest scored element from the sorted set.
The purpose of the sorted and unsorted sets is to minimize the burden of sorting the complete list of ordering scores.
Finding the highest order score in the full input set requires a sort operation with complexity $O(NlogN)$ for sets with $N$ elements.
This added complexity is mitigated by only sorting a set with fixed size $Fk$ where $F$ is a parameter and $k$ is the cardinality constraint.

\Cref{alg:DR_step} describes the steps for evaluating elements and updating ordering scores.
It begins by making a copy of the sorted and unsorted sets (line 1) which will be saved if the element is included in the solution.
The marginal value $\Gamma(e_{i}|S_{v})$ of the element is found incrementally during the evaluation pass (lines 3-13).
The marginal value of $e_{i}$ is determined by finding the elements of the input set $e_{j} \in E$ such that $d(e_{i},e_{j}) < d(e_{j},S)$.
If an input set element is closer to $e_{i}$ than the current solution (lines 4-5), then the marginal value score is increased by the difference in the distances weighted by the input set element's weighting.
On lines 7-11, the ordering score is updated by taking the difference in $\hat{\Gamma}$ divided by the number of solutions such that the average value is maintained (lines 8 and 10).
The key idea of dynamic reordering is that the evaluation of CEBC and ordering scores is performed at the same time.
$\hat{\Gamma}$ is assumed monotone-increasing such that when $d < d(e_{i},S_{v})$, $\hat{\Gamma}(d) < \hat{\Gamma}(d(e_{i},S_{v}))$.
This means that lines 8 and 10 decrease the ordering score because this new element has a higher degree of mutual information with it.

\subsection{Approximating CEBC for Dynamic Reordering}
\label{sec:CEBC_approx}

Central to the efficiency of dynamic reordering is the ability to approximate the relative marginal value of an element with respect to a given solution, denoted as $\hat{\Gamma}(e_{i}|S_{v})$ in \cref{alg:DR_step} lines 8 and 10.
In this subsection, two such approximations with $O(1)$ evaluation complexity are derived.
The first uses scan poses to identify elements that were collected far from the current solution, and the second approximates \cref{eq:CEBC} as an area coverage problem and leverages the pairwise marginal value approximation from \cref{def:incomplete_information} to reduce evaluation complexity.
Since only relative ordering is required, the two approximations can also be combined into a third function that leverages the advantages of both.

\textbf{A position-based approximation} is formulated using the known positions of each element in the dataset.
Given the position of all elements in the solution set as $X_{S} = [\textbf{x}_{1}, \textbf{x}_{2}, ..., \textbf{x}_{|S|}]$ where $\textbf{x}_{i} \in \mathbb{R}^{3}$, the pose approximate marginal value of an element with position $\textbf{x}_{e}$ is
\begin{equation}
\begin{split}
\label{eq:pose_heur}
    \hat{\Gamma}(x) & = \Gamma_{pose}(d(\textbf{x}_{e},X_{S})) \\
    & = 1-\max \left(0, -\log \left( d(\textbf{x}_{e},X_{S})/\alpha + 0.1\right)\right),
\end{split}
\end{equation}
where $\alpha$ is a tunable radius parameter.
The log function introduces a decay which is intended to mimic the overlapping area of 2D sensors with circular footprints.
This heuristic favors spatially well-separated solutions, which is effective in environments where coverage correlates with position such as outdoor mapping. 
However, it can break down when small positional differences correspond to large changes in the environment (i.e., scans taken from either side of a wall or between building floors).

\textbf{A descriptor-coverage approximation} is formulated by converting \cref{eq:CEBC} into an area coverage function (similar to \cref{eq:coverage_desc_simple}) and simplifying this function using the pairwise marginal value approximation \cref{eq:pairwise_information_approx}.
For the reader's convenience, we recall that $\mathbb{S}^{n}$ is the hypersphere which all descriptors lie on. 
The surface area of a subset of the hypersphere is denoted by the function $\mathcal{A}: \mathbb{S}^{n} \rightarrow \mathbb{R}$ where the surface area for any $A \subseteq \mathbb{S}^{n}$ is defined as $\int_{A}1d\boldsymbol{\sigma}$.
Also recall that \cref{eq:coverage_desc_simple} is the surface area of a set of spherical caps centered at the solution elements (descriptors).
It is convenient to imagine $\mathbb{S}^{n}$ as a plane or familiar $2-$sphere, as much of the same intuition holds despite the high dimensionality.
To convert \cref{eq:CEBC} into a coverage function we assume the solution set is selected from $E$, but evaluated by integrating over $\mathbb{S}^{n}$.
By replacing $E$ with $\mathbb{S}^{n}$ in \cref{eq:CEBC_loss}, a coverage loss function is derived
\begin{align}
\begin{split}
\label{eq:coverage_step1}
    L_{cebc} \approx
    L_{cover}(S) = \frac{1}{\mathcal{A}(\mathbb{S}^{n})} \int_{\mathbb{S}^{n}} d(x,S) d\boldsymbol{\sigma}.
\end{split}
\end{align}
where $\mathcal{A}(\mathbb{S}^{n})$ is again the surface area of the hypersphere.
Converting this loss term into a submodular function is similar to exemplar-based clustering and CEBC
\begin{align}
\label{eq:coverage_step2}
    f_{cover}(S) = L_{cover}(\mathbf{0}) - L_{cover}(S \cup \mathbf{0}),
\end{align}
using the origin $\mathbf{0} = [0,0,...,0] \in \mathbb{R}^{n}$ as the null element.

Evaluating \cref{eq:coverage_step2} remains computationally expensive because it requires integrating over the surface of a high-dimension hypersphere.
Additionally, the fraction of the hypersphere covered by a spherical cap is vanishingly small \cite{li2010concise}, so \cref{eq:coverage_step1} will equal $1$ even for very large solutions.
Both these problems can be solved by separating the hypersphere into regions that are either close to or far from elements of a solution and integrating over these regions separately.
The region near $S$ is defined as the union of all unit radius spherical caps centered at its descriptors, notated as $\mathcal{C}(S) := \{x \in \mathbb{S}^{n}| \ ||e-x||_{2} \leq 1, \forall e \in S\}$.
The complement of $\mathcal{C}(S)$ is defined accordingly as $\mathcal{C}^{-}(S) := \{x \in \mathbb{S}^{n} | \ ||e-x||_{2} > 1, \forall e \in S\}$.
Because $\mathcal{C}$ and $\mathcal{C}^{-}$ are complements, $\mathcal{A}(\mathbb{S}^{n}) = \mathcal{A}(\mathcal{C})+\mathcal{A}(\mathcal{C}^{-})$.

An equivalent function for maximizing \cref{eq:coverage_step2} that only integrates over $\mathcal{C}$ is
\begin{equation}
\label{eq:coverage_step3}
    \Gamma_{C}(S) = \mathcal{A}(\mathcal{C}) - \int_{\mathcal{C}} d(x,S) d\boldsymbol{\sigma}.
\end{equation}

We show that \cref{eq:coverage_step2} achieves maximum value with the optimal solution of $\cref{eq:coverage_step3}$ in the following theorem.

\begin{theorem}
    The elements that maximize \cref{eq:coverage_step2} are equivalent to the elements that maximize \cref{eq:coverage_step3}.
\end{theorem}
\begin{proof}
    Note that $\mathbf{0}$ is always unit distance from any surface element of the hypersphere, so evaluating \cref{eq:coverage_step1} with only the null element is $L_{cover}(\mathbf{0}) = \frac{1}{\mathcal{A}(\mathbb{S}^{n})}\int_{\mathbb{S}^{n}}1d\boldsymbol{\sigma} = 1$.
    By plugging in \cref{eq:coverage_step1} and using $L_{cover}(\mathbf{0}) = 1$, \cref{eq:coverage_step2} can be rewritten as
    \begin{align*}
        f_{cover}(S) &= 1 - \frac{1}{\mathcal{A}(\mathbb{S}^{n})}\int_{\mathbb{S}^{n}} d(x,S \cup \mathbf{0}) d\boldsymbol{\sigma}\\
        &= \frac{1}{\mathcal{A}(\mathbb{S}^{n})} (\mathcal{A}(\mathbb{S}^{n}) - \int_{\mathbb{S}^{n}} d(x,S \cup \mathbf{0}) d\boldsymbol{\sigma}).
    \end{align*}
    Because $\mathcal{C}$ is the portion of $\mathbb{S}^{n}$ near to $S$, we can evaluate the integral as the sum of integrals over components $\mathcal{C}$ and $\mathcal{C}^{-}$ so that $\int_{\mathbb{S}^{n}} d(x,S \cup \mathbf{0}) d\boldsymbol{\sigma} = \int_{\mathcal{C}^{-}}1d\boldsymbol{\sigma} + \int_{\mathcal{C}}d(x,S)d\boldsymbol{\sigma}$,
    \begin{equation*}
        f_{cover}(S) = \frac{1}{\mathcal{A}(\mathbb{S}^{n})} \left(\mathcal{A}(\mathbb{S}^{n}) - \int_{\mathcal{C}^{-}}1d\boldsymbol{\sigma} - \int_{\mathcal{C}} d(x,S) d\boldsymbol{\sigma}\right).
    \end{equation*}
    Recalling $\int_{\mathcal{C}^{-}}1d\boldsymbol{\sigma} = \mathcal{A}(\mathcal{C}^{-})$ and $\mathcal{A}(\mathcal{C}) = \mathcal{A}(\mathbb{S}^{n}) - \mathcal{A}(\mathcal{C}^{-})$,
    \begin{equation*}
        f_{cover}(S) = \frac{1}{\mathcal{A}(\mathbb{S}^{n})}\left(\mathcal{A}(\mathcal{C}) - \int_{\mathcal{C}} d(x,S) d\boldsymbol{\sigma}\right),
    \end{equation*}
    which, using \cref{eq:coverage_step3}, simplifies to
    \begin{equation*}
        f_{cover}(S) = \frac{1}{\mathcal{A}(\mathbb{S}^{n})}\Gamma_C(S).
    \end{equation*}
    Taking $\argmax$ over $S$ of both sides yields
    \begin{equation*}
        \argmax_S f_{cover}(S) = \argmax_S \Gamma_C(S)
    \end{equation*}
    where we have used the property that scaling a cost function by a constant does not change the optimal solution. 
    Hence, the elements that maximize \cref{eq:coverage_step2} also maximize \cref{eq:coverage_step3}.
\end{proof}

\begin{figure}[t!]
    \centering
    \includegraphics[width=0.48\textwidth]{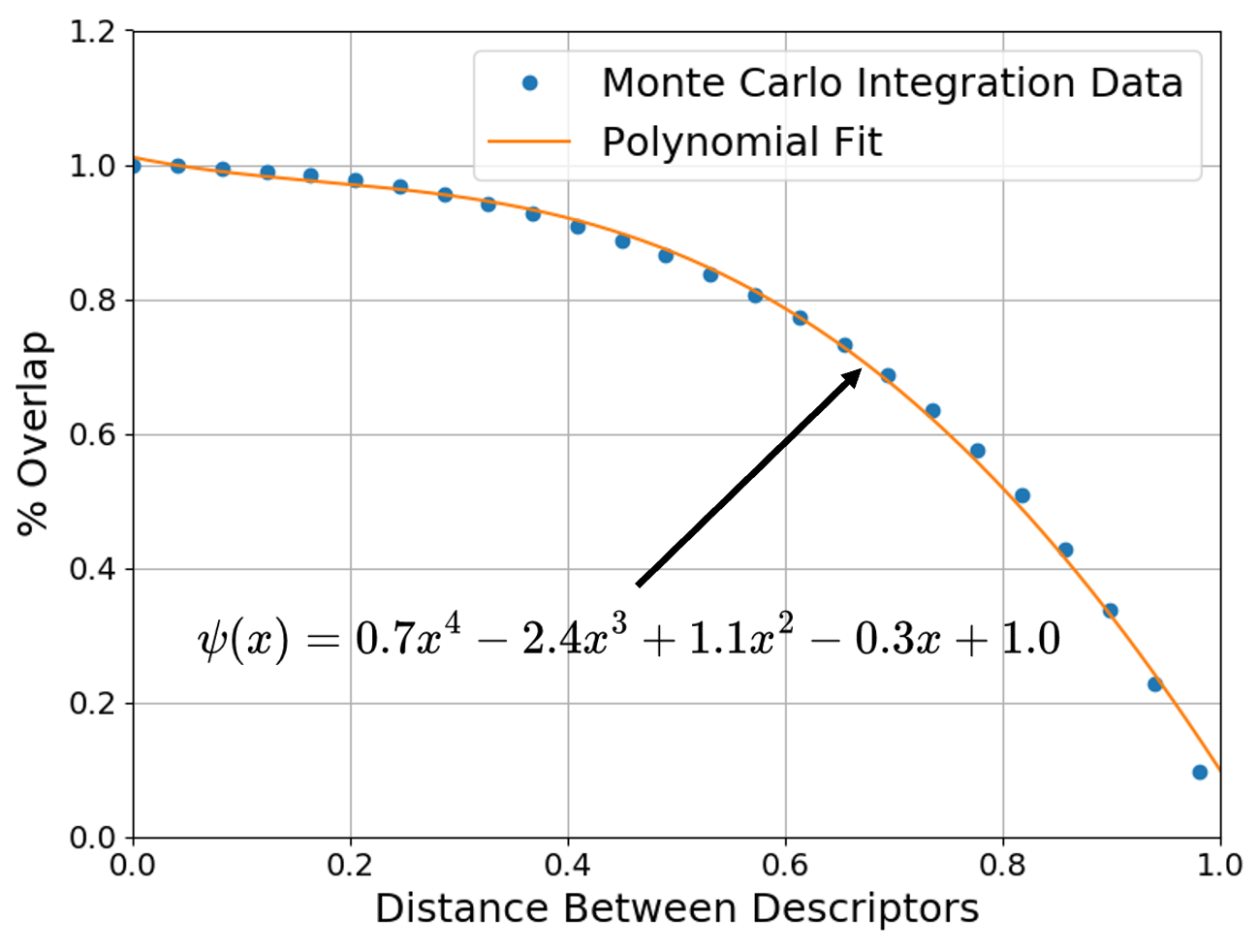}
    \caption{Mutual information with respect to \cref{eq:coverage_step3} as percent overlap between spherical caps. Overlap percent is obtained as a function of the distance between descriptors using a dense Monte Carlo integration over the surface of $\mathbb{S}^{n}$, with a fourth order polynomial function $\psi(x)$.}
    \label{fig:2-inf-overlap}
    \vskip -0.15in
\end{figure}

Computing the marginal value of \cref{eq:coverage_step3} remains difficult due to integrating over multiple spherical caps, but can be simplified using pairwise information.
Note that any solution with one element will have constant value as any two solutions could be related by an isometric rotation,
\begin{equation}
    \Gamma_{C}(\{e\}) = \mathcal{A}(\mathcal{C}) - \int_{\mathcal{C}}d(x,\{e\})d\boldsymbol{\sigma} = c.
\end{equation}

We define the degree of mutual information between two elements $e_{i}$ and $e_{j}$ as $\mathcal{O}(d(e_{i},\{e_{j}\})):\mathbb{R} \rightarrow \mathbb{R}$. 
Thus, the value of any solution with two elements is $\Gamma_{C}(S) = c + c (1-\mathcal{O}(d(e_{i},\{e_{j}\})))$.
$\mathcal{O}(d(e_{i},\{e_{j}\}))$ can be computed offline and approximated using the polynomial given as $\psi(x)$ in \cref{fig:2-inf-overlap}.
Spherical caps have no overlap if their centers are further than $1.1$ so the pairwise mutual information function is 
\begin{equation}
    \mathcal{O}(d(e_{i},\{e_{j}\})) = 
    \begin{cases}
        \psi(d(e_{i},\{e_{j}\})) &\text{if } d(e_{i},\{e_{j}\})<1.1\\
        0 &\text{otherwise}.
    \end{cases}
\end{equation}

The pairwise marginal value approximation of \cref{eq:coverage_step3} can then serve as an approximation for the marginal value of \cref{eq:CEBC}.
Specifically, the approximation function $\hat{\Gamma}(x)$ used for generating order scores in dynamic reordering is
\begin{equation}
\begin{split}
\label{eq:desc_heur}
    \hat{\Gamma}(x) = \overline{\Gamma}_{C}(d(e,S)) \propto 1-\mathcal{O}(d(e,S)).
\end{split}
\end{equation}
When $\hat{\Gamma}(x)$ is evaluated on lines 8 and 10 of \cref{alg:DR_step}, the new element is already known to be closer than any element in the solution so $d(e,S)$ is assumed known when evaluating \cref{eq:desc_heur}.
The single element score constant $c$ is dropped because only the relative value of ordering scores is required.

The value of any solution with two elements can be reduced to a single input function because their overlap can be defined using their pairwise distance.
Finding the value for solutions with three or more elements would require an overlap function with at least three inputs.
In fact, finding the total area covered by $p$ caps up to any isometric transformation requires $\frac{p(p-1)}{2}$ pairwise distances.
Thus in order to find an $O(1)$ approximation for solutions with more than two elements similar to \cref{eq:desc_heur} would require either a lookup table with $\frac{p(p-1)}{2}$ dimensions or finding coefficients for a multilinear polynomial.
It is not immediately obvious that either of these approximations would remain tractable or accurate and are therefore not used.
\section{Algorithmic Implementation}
\label{sec:alg_imp}

This section describes algorithmic implementation details which reduce OptMap's computation time and allow for practical application through solution constraints.
The first subsection describes our method for deriving tight \emph{a priori} solution bounds, and the second details simple position and time constraints used for the included online geometric change detection example.

\subsection{Initial Solution Bounds}
\label{subsec:bounds}

\begin{algorithm}[t]
\caption{Heuristic Solution for Initial Lower Bound}
\label{alg:heur_sol}
\begin{algorithmic}[1]
    \renewcommand{\algorithmicrequire}{\textbf{Given:}}
    \renewcommand{\algorithmicensure}{\textbf{Output:}}
    \REQUIRE $E, k$ \\
    \ENSURE $\underline{OPT} \leftarrow f(S_{h})$ \\
    \STATE $d_{tot} = 0$
    \FOR {$i \in \{2,3,...,|E|\}$}
        \STATE $d_{tot} = d_{tot} + ||e_{i} - e_{i-1}||_{2}$
    \ENDFOR
    \STATE $S_{h} = \emptyset, i=1$
    \WHILE {$|S_{h}| \leq k$}
        \STATE $d = 0$
        \WHILE {$d \leq d_{tot}/k$}
            \STATE $d = d + ||e_{i} - e_{i-1}||_{2}$
            \STATE $i = i+1$
        \ENDWHILE
        \STATE $S_{h} = S_{h} \cup e_{i}$
    \ENDWHILE
\end{algorithmic}
\end{algorithm}

The time complexity of streaming submodular maximization algorithms can vary significantly depending on the existence or quality of \emph{a priori} solution value bounds.
This is because tight bounds allow for fewer maintained solutions which ultimately require fewer computations.
\cref{alg:heur_sol} describes our method for generating a heuristic solution which can be used to obtain an initial lower bound.
The heuristic solution finds $k$ elements with equal distance \textit{along the descriptor trajectory}, where the equal spacing works best in mapping datasets which explore new environments at consistent rates.
Lines 1-4 are performed alongside the reduced set selection from \cref{alg:reduced_set_selection}, where the total descriptor trajectory length is calculated by summing the total distance between all sequential input set elements.
The heuristic solution is thus built (lines 5-12) by selecting elements from the input set at regular intervals of $d_{tot}/k$.
The lower bound is obtained by evaluating the heuristic solution on \cref{eq:CEBC} which requires one evaluation pass.

The maximum value of \cref{eq:CEBC} is simple to obtain and is achieved when the solution set contains the entire solution set.
In this case, the second loss term in \cref{eq:CEBC} equates to $0$, and thus the upper bound is always $\overline{OPT} = 1$.



\subsection{Position and Time Constraints}
\label{sec:pos_time_constr}

The input set can be further constrained by applying additional position and time constraints.
Position constraints are defined using a set of balls $(x_{c}, r)$, where $x_{c}$ is the coordinates of the ball center and $r$ the radius, and time constraints are defined with a set of initial and final timestamps $(t_{i}, t_{f})$.
Each element of the input set is a scan with an associated position $x(e_{i})$ and time $t(e_{i})$.
The filtered input set is then the subset of the input set with poses and timestamps inside the defined position and time constraints, defined as $E_{c} := \{e_{i} \in E | \, ||x(e_{i})-x_{c}||_{2} \leq r \, , \, t(e_{i}) \in [t_{i},t_{f}]\}$.
Multiple constraints can be applied, giving OptMap the ability to distill specified portions of a mapping session.
An example of the position constraints are shown in \cref{fig:top-right-changedetect}, where position and time constraints are seen as valuable tools for customizing distilled maps to specific downstream applications.
We note that these simple constraints are sufficient for the included online change detection example, but more complex constraints could be substituted for other novel applications.
\section{Results}
\label{sec:results}

In this section, we provide results that demonstrate OptMap's theoretical contributions, expected distilled map quality, and computational performance.
The first experiment identifies the best function for quantifying informativeness in geometric map distillation and examines the effects of the reduced set selection threshold.
The second result demonstrates that dynamic reordering effectively reduces input order bias.
The third result highlights the computation time benefits of using tight \emph{a priori} solution bounds.
The final result demonstrates a specific application of OptMap in real-time geometric change detection.
Datasets include custom LiDAR mapping sessions collected on the UCLA campus using 32-beam Ouster OS0 and OS1 sensors (Sculpture Garden, Courtyard, and Lab), and two custom datasets from the Army Research Laboratory collected using a 128-beam Ouster OS1 sensor.
Open-source mapping sessions from the Newer College \cite{ramezani2020newer}, Oxford Spires \cite{tao2025spires}, and Semantic KITTI \cite{behley2019iccv} datasets are also used for evaluation.

Dense comparison maps are generated by combining 250 scans then applying a 0.5 m voxel filter to ensure uniform point density.
The reduced set selection threshold is $\mathcal{E} = 0.25$, sorted set size factor is $F = 10$, and distance radius for the pose-based CEBC approximation function is $\alpha = 15$ across all results unless otherwise noted.
The descriptor neural network is trained on UCLA campus datasets not used in any results for 30 epochs.
A modified variant of \cite{chen2023direct} with loop closures and custom OptMap interfaces provides LiDAR scans with labeled poses and motion correction.
An odometry-only variant without loop closures is available open-source\footnote{Modified version of DLIO \cite{chen2023direct} for OptMap: \url{https://github.com/vectr-ucla/direct_lidar_inertial_odometry/tree/feature/optmap}}.

\subsection{Geometric Map Distillation Reward Function Evaluation}
\label{subsec:CEBC_result}

\begin{figure}[t!]
    \centering
    \includegraphics[width=0.48\textwidth]{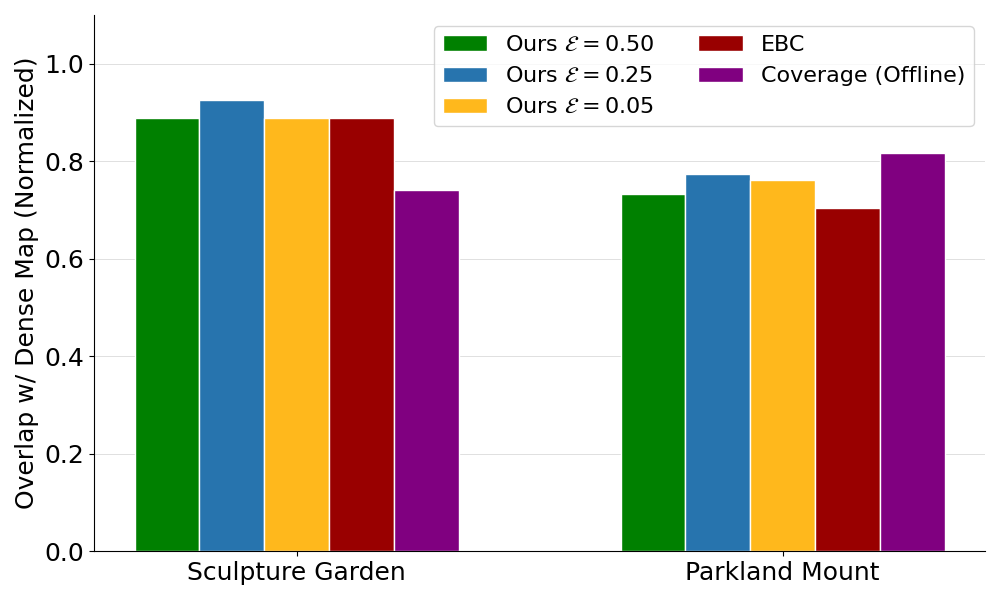}
    \caption{Normalized percent overlap of distilled maps for reward functions when maximized via the greedy algorithm, higher indicates better performance. Sculpture Garden is a custom dataset in a semi-forested environment, and Parkland Mount is from \cite{ramezani2020newer}.
    CEBC (Ours) is evaluated using reduced input sets constructed with admission thresholds of $\mathcal{E}=0.05, 0.25, $ and $0.5$.}
    \label{fig:result1_reward_comp}
    \vskip -0.1in
\end{figure}

The first experiment seeks to find the best function for quantifying informativeness in geometric map distillation and studies the effects of the reduced set selection threshold parameter $\mathcal{E}$.
We consider three possible variants: a direct implementation of descriptor coverage \cref{eq:coverage_desc_simple}, Exemplar-Based Clustering (EBC) \cref{eq:EBC}, and Continuous Exemplar-Based Clustering (CEBC, Ours) \cref{eq:CEBC}.
CEBC was further split into three variants with reduced set thresholds of $\mathcal{E}=0.05$, $0.25$, and $0.5$.
Each function was optimized using the greedy algorithm to isolate results from any bias present in streaming submodular optimization.
The datasets used were the semi-urban outdoor Sculpture Garden and Parkland Mount from \cite{ramezani2020newer}.
To measure informativeness, the overlap between each function's output map and a dense comparison map is used.
The number of scans used in output maps was increased until any output exceeded 50\% overlap.
The overlap percent for each output was then normalized with an optimal result generated by greedily selecting scans that overlap with the dense map.

Results from the first experiment are seen in \cref{fig:result1_reward_comp} where CEBC performs near the optimal across both datasets and with multiple reduced set thresholds, EBC performs well on Sculpture Garden, and coverage performs well on Parkland Mount.
Normalized overlap percentages for Sculpture Garden were 89\%, 93\%, 89\%, 89\%, and 74\% for CEBC with $\mathcal{E}=0.5$, $0.25$, $0.05$, EBC, and coverage respectively.
Normalized overlap for Parkland Mount was 73\%, 78\%, 76\%, 70\%, and 82\% for the five variants respectively.
Increasing the reduced set threshold from $\mathcal{E}=0.25$ to $0.5$ reduced overlap, suggesting that higher thresholds discard useful information.
However, CEBC with all reduced set thresholds equaled or exceeded EBC, indicating that the continuous trajectory assumption helped quantify informativeness.
CEBC also compares favorably with the coverage function which performed well on Parkland Mount but poorly on Sculpture Garden.
This inconsistency arose because the coverage function only rewards self-dissimilar solutions, making it less robust.
This resulted in lower overlap on the Sculpture Garden dataset which had more sequential descriptors further than 0.25 apart (48 vs. 32, \cref{fig:assumption1-context}).

\begin{figure}[t!]
    \centering
    \includegraphics[width=0.48\textwidth]{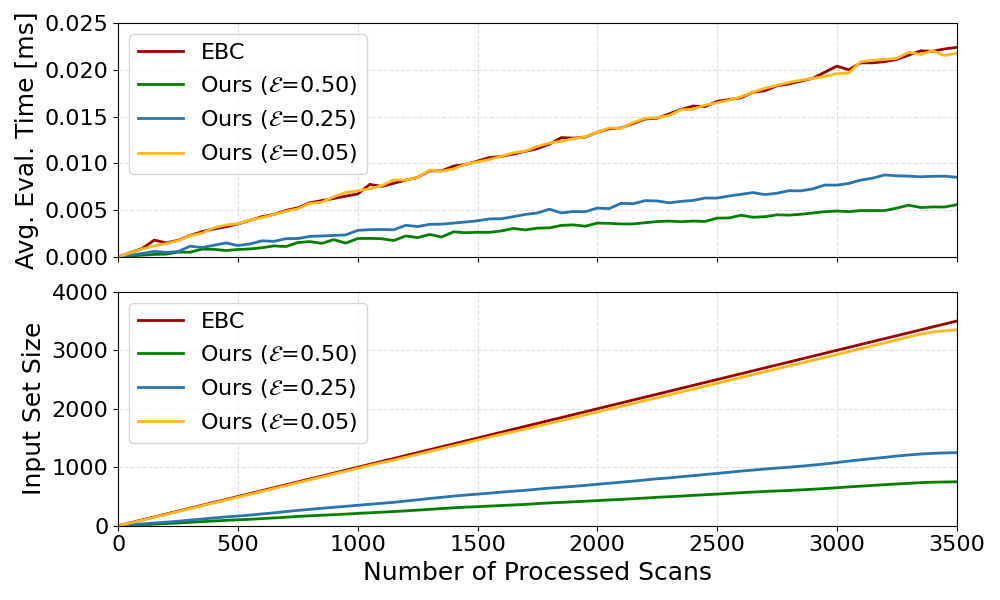}
    \caption{Average function evaluation time and input set size for Sculpture Garden dataset. Ours is CEBC. Evaluation time decreases with input set size. Multiple reduced set distance thresholds $\mathcal{E}$ are used to show that computation time approaches EBC as the reduced and full sets converge.}
    \label{fig:result1_comp_time}
    \vskip -0.15in
\end{figure}

\begin{figure*}[t]
    \centering
    \vspace{6pt}
    \includegraphics[width=0.96\textwidth]{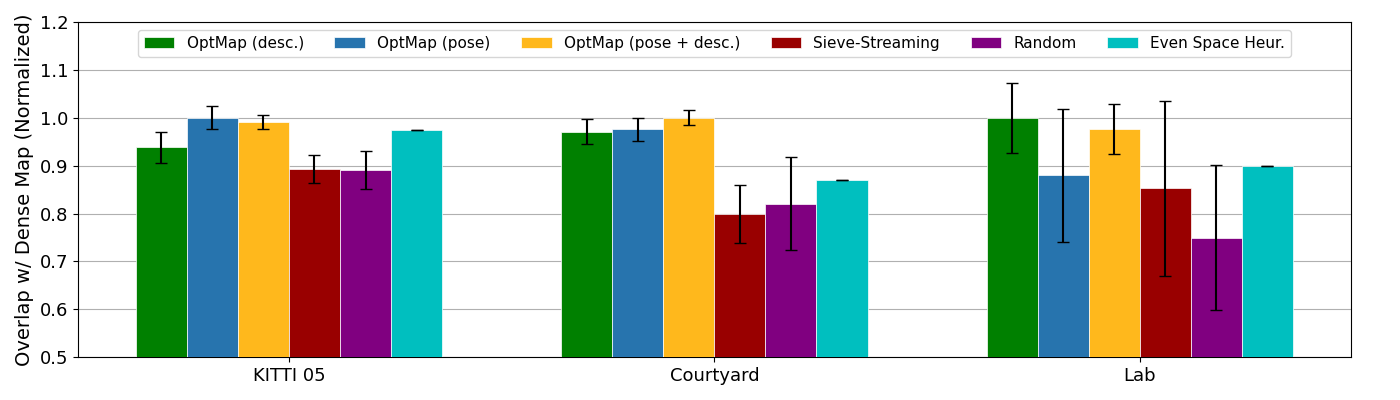}
    \caption{Percent of overlap from dense map to output map for ablation test on KITTI 05 \cite{behley2019iccv}, Courtyard, and Lab datasets shown in \cref{fig:result2_densemaps}. Results are normalized by the highest scoring method for each dataset. KITTI 05 output maps made with fifty} scans, Courtyard maps made with twelve scans, and Lab maps made with eight scans.
    \label{fig:overlap_ablation}
    \vskip -0.15in
\end{figure*}

\begin{figure}[t!]
    \centering
    \includegraphics[width=0.48\textwidth]{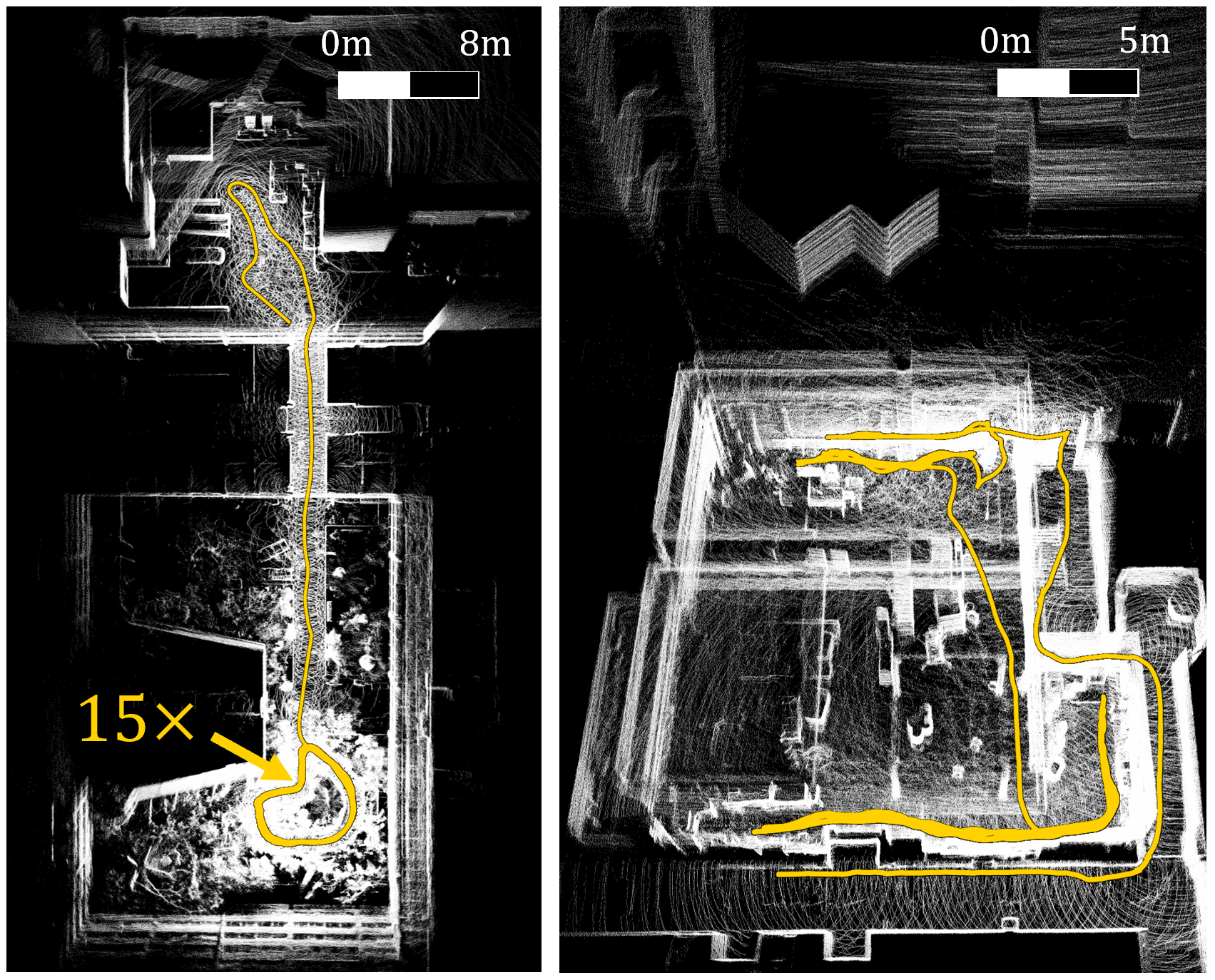}
    \caption{Dense maps used for \cref{fig:overlap_ablation}. Each map is made from 250 scans from a 32-beam Ouster OS0, with the sensor trajectory indicated by the yellow path. Left is Courtyard which entails fifteen loops of an outdoor courtyard followed by a single mapping pass through tight corridors and an adjacent alley in the figure top. Right is Lab which contains multiple mapping passes capturing the interior of a large lab space and a single pass captured the outdoor alley and nearby tight hallways.}
    \label{fig:result2_densemaps}
    \vskip -0.2in
\end{figure}

CEBC also uses a reduced input set to reduce function evaluation time as shown in \cref{fig:result1_comp_time}.
For CEBC with $\mathcal{E}=0.5$, $0.25$, $0.05$ and EBC the final input set size for the Sculpture Garden was 753, 1256, 3376, and 3575 scans respectively.
The final average evaluation time for each variant was 0.0057, 0.0087, 0.0219, and 0.0228 milliseconds showing a strong correlation between input set size and function evaluation time.
Compared to EBC, CEBC with the best threshold from \cref{fig:result1_reward_comp}, $\mathcal{E}=0.25$, saw an 84\% reduction in average evaluation time and 65\% reduction in input set size.
Note that the greedy algorithm requires $O(kN)$ evaluations to find a solution, so a solution with 25 elements would require 2038 ms of optimization for EBC and 273 ms for CEBC with $\mathcal{E}=0.25$.

\subsection{Dynamic Reordering Effectiveness Ablation Study}
\label{subsec:dyn_reorder_result}

\begin{figure*}[t!]
    \centering
    \includegraphics[width=0.96\textwidth]{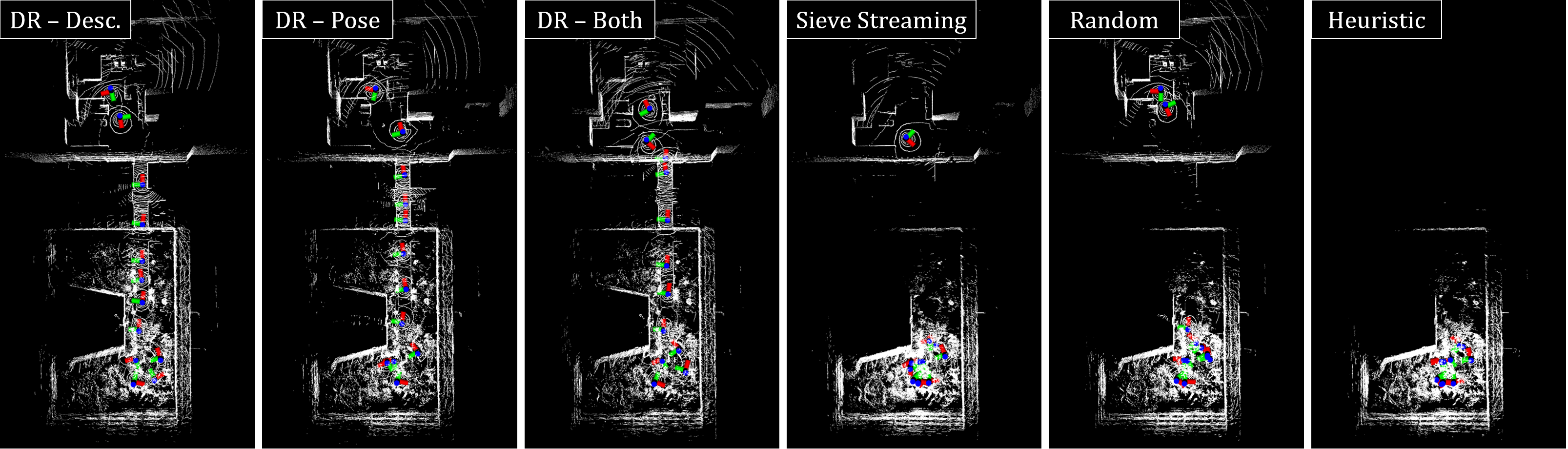}
    \caption{Sample output maps using different methods from \cref{fig:overlap_ablation}. Methods from left to right are OptMap with descriptor-based dynamic reordering, OptMap with pose-based dynamic reordering, OptMap with both descriptor and pose dynamic reordering, Sieve-Streaming, random selection, and the evenly spaced heuristic. Dynamic reordering methods capture the entire map where the other three methods only select scans from the loop seen at the bottom of each map.}
    \label{fig:result2_boelter}
    \vskip -0.1in
\end{figure*}

\begin{figure}[t!]
    \centering
    \includegraphics[width=0.48\textwidth]{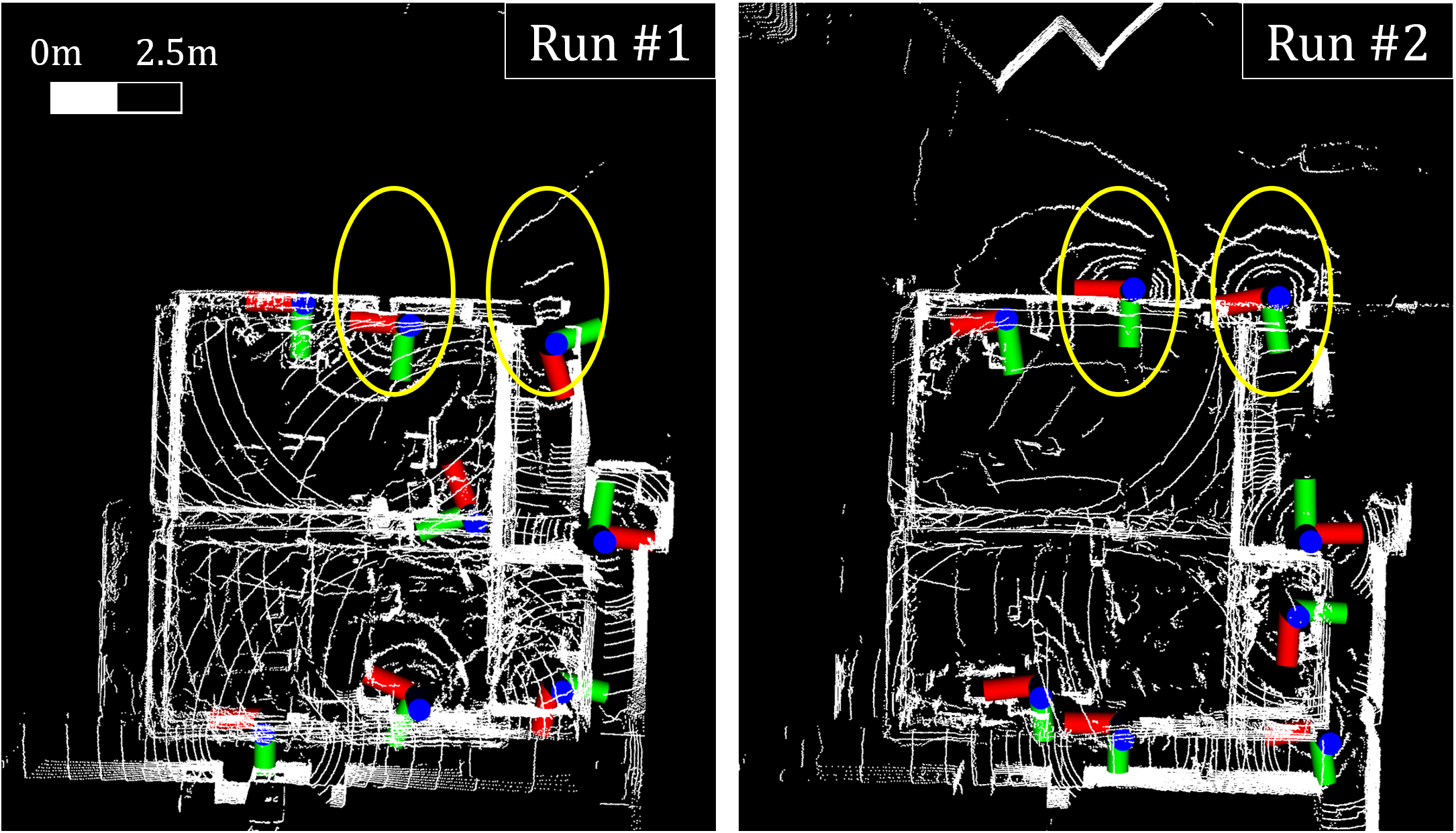}
    \caption{The pose-based dynamic reordering is unable to accurately select the highest value elements in cases where nearby scans capture significantly different environments. These two maps from the Lab dataset (\cref{fig:overlap_ablation}) were generated using the same pose-based approximation dynamic reordering method, and demonstrate the large standard deviation from the pose-based dynamic reordering as scans are arbitrarily selected from one side of the wall (yellow ovals) leading to occasional suboptimal results.}
    \label{fig:result2_lab_pose_comp}
    \vskip -0.2in
\end{figure}

The second experiment highlights the benefits of dynamic reordering and compares the relative strengths and weaknesses of multiple approximation functions used to generate ordering scores.
In addition to the approximations presented in \cref{sec:CEBC_approx} (the pairwise information descriptor approximation \cref{eq:coverage_step3} and heuristic pose approximation \cref{eq:pose_heur}), we also evaluate a third approximation, which takes the sum of both the pairwise information descriptor and pose approximations. 
An ablation study compares OptMap using these three approximation functions with the baseline Sieve-Streaming algorithm, random element selection, and the heuristic solution with even spacing described by \cref{alg:heur_sol}.
Sieve-Streaming is OptMap with no dynamic reordering but has identical initial solution value bounds.
All methods use input sets filtered by \cref{alg:reduced_set_selection}.

Each optimization method generated ten distilled maps on KITTI 05 \cite{behley2019iccv} and two custom datasets collected at UCLA with dense maps shown in \cref{fig:result2_densemaps}.
KITTI 05 is a medium length dataset (2736 scans) collected on a fast moving car with no significant bias caused by remapping. 
Courtyard (\cref{fig:result2_densemaps} left) introduces a bias to the input set by repeating a small loop fifteen times before making a single mapping pass away from the loop.
The bias in the input set is the result of the loop being significantly overrepresented, with the loop accounting for 86\% of collected scans but only 31\% of the unique distance traveled.
Finally, Lab (\cref{fig:result2_densemaps} right) introduces input set bias by mapping the inside south and north walls of a lab space with four passes each.
Scans in the Lab dataset were collected on either side of a thin wall and garage door to stress the position-based CEBC approximation.
The UCLA datasets are meant to show the value of dynamic reordering by biasing the input sets.
Although this bias is uncommon in open-source datasets, it is not uncommon in practical applications where mapping sessions revisit the same location several times or remain stationary.
The output map size for each dataset is hand-picked, with fifty scans used for KITTI 05, twelve scans used for Courtyard, and eight scans used for Lab.

The average overlap percent and its standard deviation for all three datasets are presented in \cref{fig:overlap_ablation}. 
We normalized the results by the highest performing method for each dataset.
All methods are expected to perform well on KITTI 05, but the fast moving car and relatively self-similar street environment stressed the descriptor-based approximation method.
The descriptor, pose, and combined dynamic reordering methods have normalized average overlaps of 93.8\%, 100\%, and 99.0\%, with normalized standard deviations of 3.2\%, 2.4\%, and 1.5\% respectively.
Sieve-Streaming, random, and the evenly spaced heuristic had normalized average overlaps of 89.3\%, 89.1\%, and 97.4\% and standard deviations of 2.9\% and 3.9\% (the heuristic is deterministic and hence has no variance).
We note that the $1/2$-suboptimality bound is reached on the UCLA datasets using only 1-3 scans, so Sieve-Streaming performs similarly to random selection in all cases.

Courtyard is meant to demonstrate the strengths of dynamic reordering, which are emphasized by the visual results presented in \cref{fig:result2_boelter}.
The first three panels of \cref{fig:result2_boelter} are representative maps from the descriptor, pose, and combined dynamic reordering methods where multiple scans and sufficient coverage of the single pass are present for all three methods.
The final three panels show representative maps from Sieve-Streaming, random, and the evenly spaced heuristic; each contain very few scans and minimal coverage from the single pass.
The qualitative results match the overlap and standard deviations presented in \cref{fig:overlap_ablation}.
The average normalized overlaps are 97.1\%, 97.6\%, and 100\% for the descriptor, pose, and combined approximation dynamic reordering, and the average normalized standard deviations are 2.5\%, 2.4\%, and 1.6\%.
In contrast, the average normalized overlaps for Sieve-Streaming, random, and the evenly spaced heuristic are 80.0\%, 82.0\%, and 87.0\%.
The average normalized standard deviations for Sieve-Streaming and random are 6.0\% and 9.7\%.
Both \cref{fig:result2_boelter} and \cref{fig:overlap_ablation} show that the value of dynamic reordering increases when an input set contains redundantly mapped areas.
In the context of field operations, this is most apparent in providing summary maps when a robot has patrolled the same area many times, but then explores a new environment, i.e., a patrol task followed by a reconnaissance task. 

\begin{figure}[t!]
    \centering
    \includegraphics[width=0.48\textwidth]{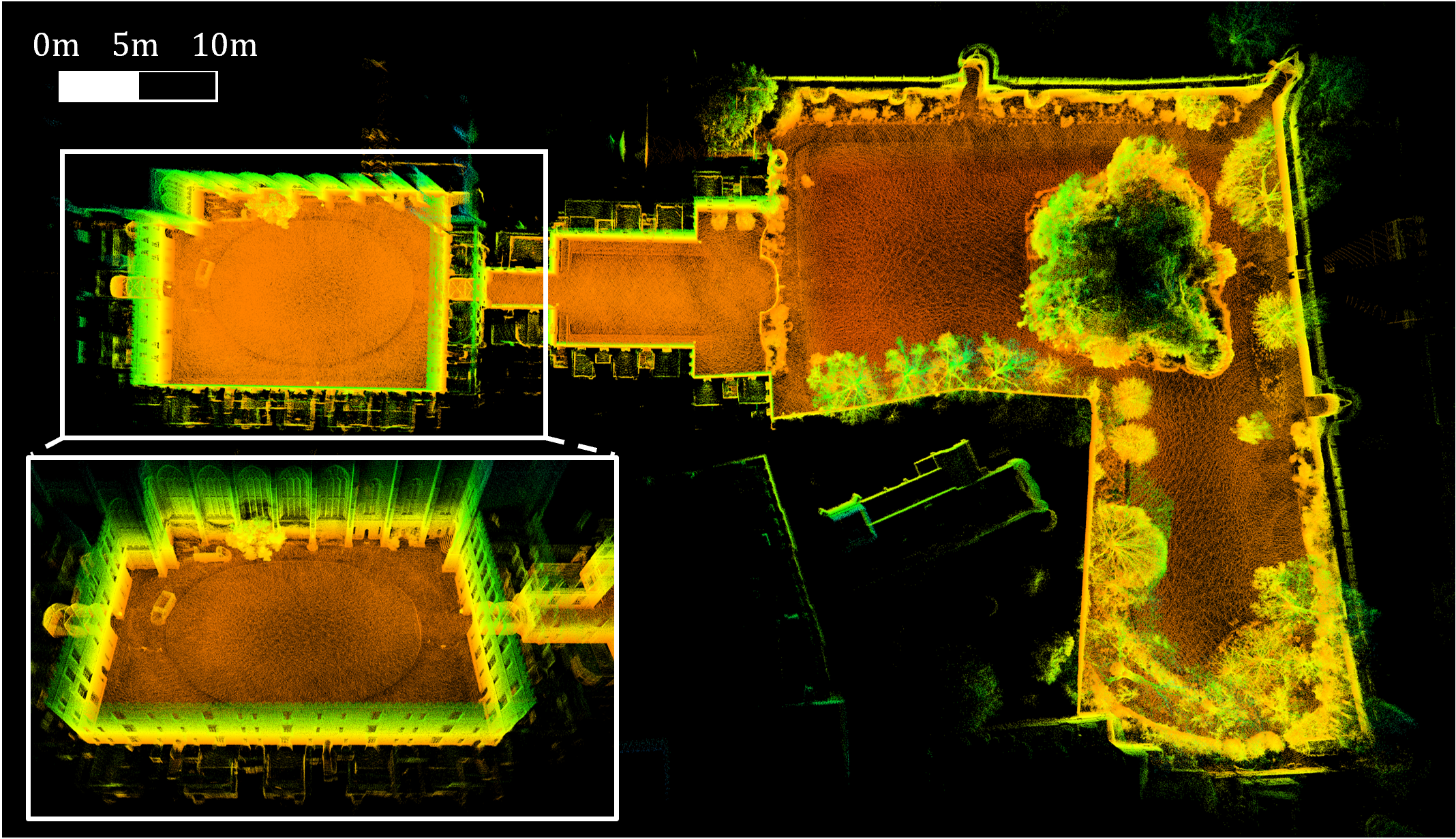}
    \caption{250 scan output OptMap map from Newer College Dataset Long. Color indicates z-coordinate of point. Inset box shows the level of detail and geometric map consistency which OptMap can produce. Maps with similar level of detail are difficult to maintain in online SLAM due to memory or computation time constraints.}
    \label{fig:result3_NCD}
    \vskip -0.15in
\end{figure}

The Lab dataset highlights the strengths of the pairwise information descriptor approximation for dynamic reordering, as it preserves scene context when scans from different environments are collected in close proximity.
The average normalized overlap percents for the descriptor, pose, and combined methods are 100\%, 88.0\%, and 97.7\% with standard deviations of 7.4\%, 13.9\%, and 5.3\% respectively.
The pose approximation had a significantly worse average and higher standard deviation because scans from either side of a thin wall will be scored similarly despite capturing significantly different environments, as shown in \cref{fig:result2_lab_pose_comp}.
This indicates that descriptors are best in structured environments where small pose differences can lead to large scene changes, and pose approximation is best in outdoor mapping where descriptors are less capable of learning unstructured environments.
Sieve-Streaming and random had low average normalized overlap with 85.3\% and 75.0\% with particularly high standard deviation with 18.3\% and 15.1\%.
The evenly spaced heuristic had a normalized overlap of 89.9\%, which matches the trend of the evenly spaced heuristic outperforming Sieve-Streaming, Random, and either the descriptor or pose approximation \emph{in datasets that stress the approximation}.
Across all three datasets the combined approximation is best or near best, indicating that it preserves the relative strengths of the pose and descriptor approximation without suffering in edge cases where those particular approximations fail.

\begin{table*}
\centering
\caption{OptMap computation time results. Left column gives the dataset, where sequences from three standardized datasets were chosen alongside one custom dataset collected on a 128-beam Ouster OS1 LiDAR (Forest Loop pictured in \cref{fig:result3_forestloop}). Distilled map size indicates how many scans were used to generate the map, optimization time is the time from function call to when the solution set is selected, and load time is the remaining time spent loading point clouds from storage to computer memory. Method (1) uses the initial solution from \cref{subsec:bounds} and method (2) does not.}
\label{tab:result2_output}
\begin{tabular}[c]{ p{1.45in}||p{0.55in}||p{0.55in}|p{0.55in}|p{0.55in}|p{0.55in}|p{0.55in}|p{0.55in}}
\centering Dataset \\(Num. Scans) & \centering Distilled Map Size & \centering (1) Opt. [ms] & \centering (1) Load [ms] & \centering \textbf{(1) Total [ms]} & \centering (2) Opt. [ms] & \centering (2) Load [ms] & \centering \textbf{(2) Total [ms]} \arraybackslash\\
\hline
\multirow{3}{1.5in}{\centering Oxford Spires Obs. Quarter Seq. 1 \cite{tao2025spires} \\ (2,894 scans)} & \centering25 & \centering2.2 & \centering 65.2 & \centering\textbf{67.4} & \centering14.0 & \centering 68.0 & \centering\textbf{82.0} \arraybackslash\\
& \centering50 & \centering4.1 & \centering 123.6 & \centering\textbf{127.7} & \centering28.9 & \centering 131.1 & \centering\textbf{160.0} \arraybackslash\\
& \centering100 & \centering8.4 & \centering 243.9 & \centering\textbf{252.3} & \centering57.6 & \centering 252.2 & \centering\textbf{309.8} \arraybackslash\\
\hline
\multirow{3}{1.5in}{\centering Oxford Spires Blenheim Palace Seq. 3 \cite{tao2025spires} \\(6,970 scans)} & \centering25 & \centering3.8 & \centering 66.8 & \centering\textbf{70.6} & \centering23.0 & \centering 67.7 & \centering\textbf{90.7} \arraybackslash\\
& \centering50 & \centering7.0 & \centering 126.4 & \centering\textbf{133.4} & \centering46.5 & \centering 132.6 & \centering\textbf{179.1} \arraybackslash\\
& \centering100 & \centering15.6 & \centering 252.6 & \centering\textbf{268.2} & \centering93.1 & \centering 256.4 & \centering\textbf{349.5} \arraybackslash\\
\hline
\multirow{3}{1.5in}{\centering Semantic KITTI 10 \cite{behley2019iccv} \\(1,224 scans)} & \centering25 & \centering2.2 & \centering 65.2 & \centering\textbf{67.4} & \centering14.0 & \centering 68.0 & \centering\textbf{82.0} \arraybackslash\\
& \centering50 & \centering4.1 & \centering 123.6 & \centering\textbf{127.7} & \centering28.9 & \centering 131.1 & \centering\textbf{160.0} \arraybackslash\\
& \centering100 & \centering8.4 & \centering 243.9 & \centering\textbf{252.3} & \centering57.6 & \centering 252.2 & \centering\textbf{309.8} \arraybackslash\\
\hline
\multirow{3}{1.5in}{\centering Semantic KITTI 02 \cite{behley2019iccv} (4,663 scans)} & \centering100 & \centering26.9 & \centering 405.7 & \centering\textbf{432.6} & \centering67.1 & \centering 411.5 & \centering\textbf{478.6} \arraybackslash\\
& \centering250 & \centering229.8 & \centering 859.4 & \centering\textbf{1089.2} & \centering642.5 & \centering 660.7 & \centering\textbf{1303.2} \arraybackslash\\
& \centering500 & \centering329.6 & \centering 1854.9 & \centering\textbf{2184.5} & \centering2224.0 & \centering 803.1 & \centering\textbf{3027.1} \arraybackslash\\
\hline
\multirow{3}{1.5in}{\centering Newer College Long Experiment \cite{ramezani2020newer} \\(26,559 scans)} & \centering100 & \centering80.5 & \centering 363.6 & \centering\textbf{444.1} & \centering493.8 & \centering 354.4 & \centering\textbf{848.2} \arraybackslash\\
& \centering250 & \centering186.5 & \centering 896.5 & \centering\textbf{1083.0} & \centering1248.8 & \centering 917.5 & \centering\textbf{2166.3} \arraybackslash\\
& \centering500 & \centering366.7 & \centering 1919.7 & \centering\textbf{2286.4} & \centering2584.4 & \centering 1956.8 & \centering\textbf{4541.2} \arraybackslash\\
\hline
\multirow{3}{1.5in}{\centering Forest Loop \\ (34,158 scans)} & \centering100 & \centering103.8 & \centering 561.6 & \centering\textbf{665.4} & \centering683.8 & \centering 570.0 & \centering\textbf{1253.8} \arraybackslash\\
& \centering250 & \centering275.4 & \centering 1389.9 & \centering\textbf{1665.3} & \centering1694.4 & \centering 1405.4 & \centering\textbf{3099.8} \arraybackslash\\
& \centering500 & \centering531.0 & \centering 2672.3 & \centering\textbf{3203.3} & \centering3353.4 & \centering 2712.4 & \centering\textbf{6065.8} \arraybackslash\\
\end{tabular}
\vskip -0.05in
\end{table*}

\begin{figure*}[t!]
    \centering
    \includegraphics[width=0.96\textwidth]{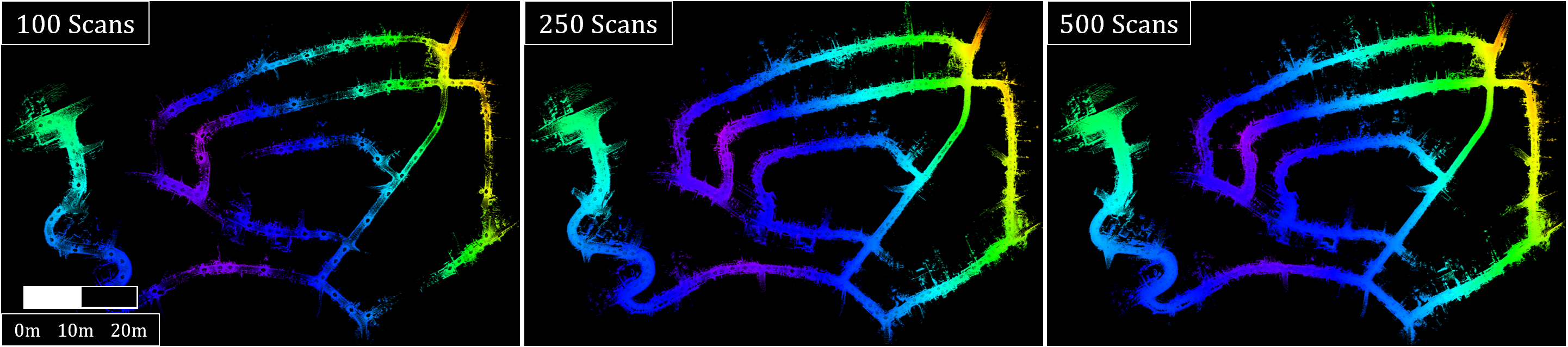}
    \caption{Output OptMap maps for Semantic KITTI 02 with 100, 250, and 500 scans. Color indicates z-coordinate of point. OptMap is capable of generating map summaries with flexible size and resolution for autonomous planning at various scales.}
    \label{fig:result3_kitti}
    \vskip -0.05in
\end{figure*}

\begin{figure}[t!]
    \centering
    \includegraphics[width=0.48\textwidth]{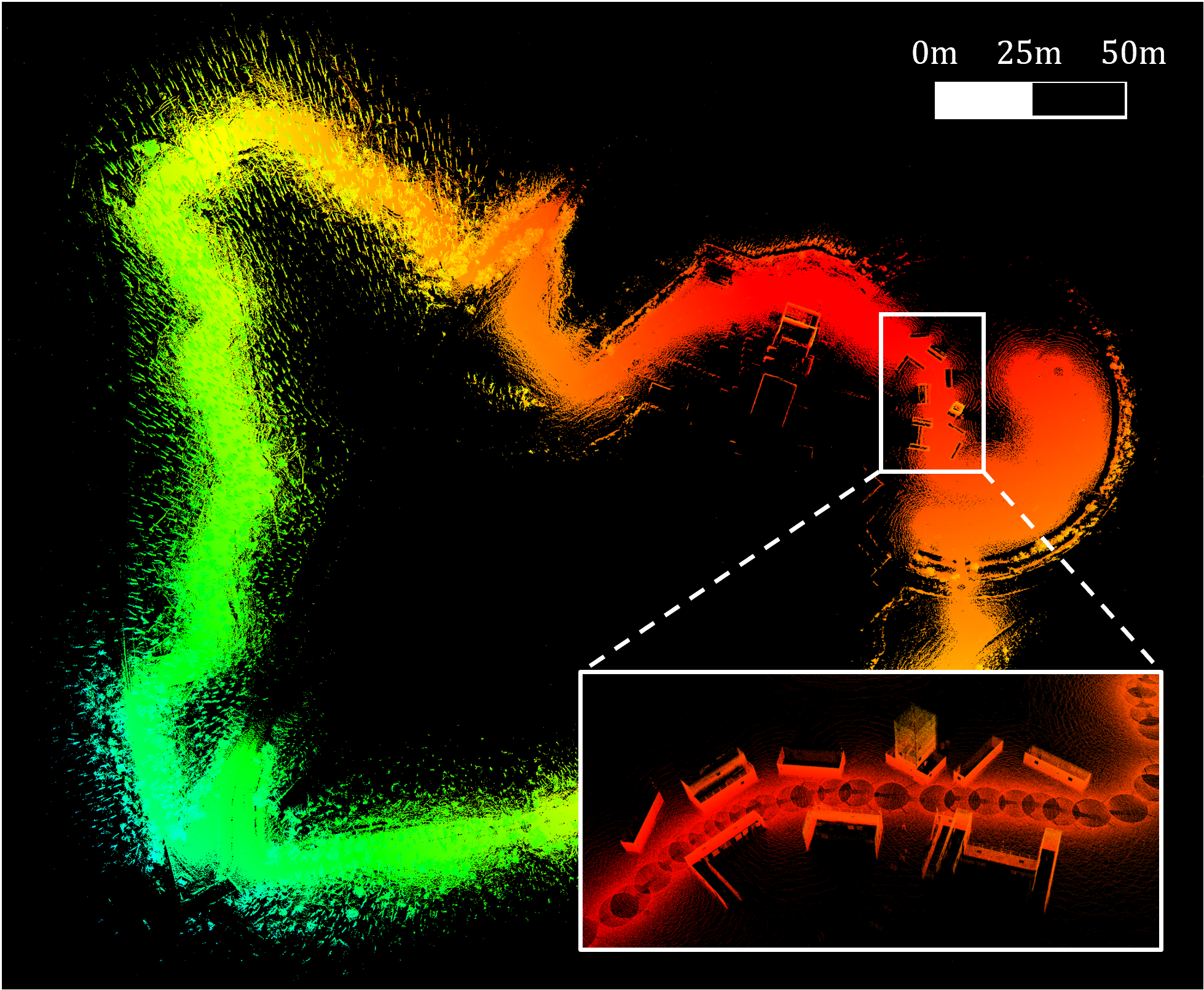}
    \caption{500 scan output OptMap map from custom forest loop dataset. Color indicates z-coordinate of point. Forest loop dataset covers 2.6 km and collects 34,158 scans. Inset box shows the level of detail around several buildings.}
    \label{fig:result3_forestloop}
    \vskip -0.15in
\end{figure}

\subsection{Real-Time Map Summarization Evaluation}
\label{subsec:summarization_res}

\begin{figure*}[t!]
    \centering
    \includegraphics[width=0.98\textwidth]{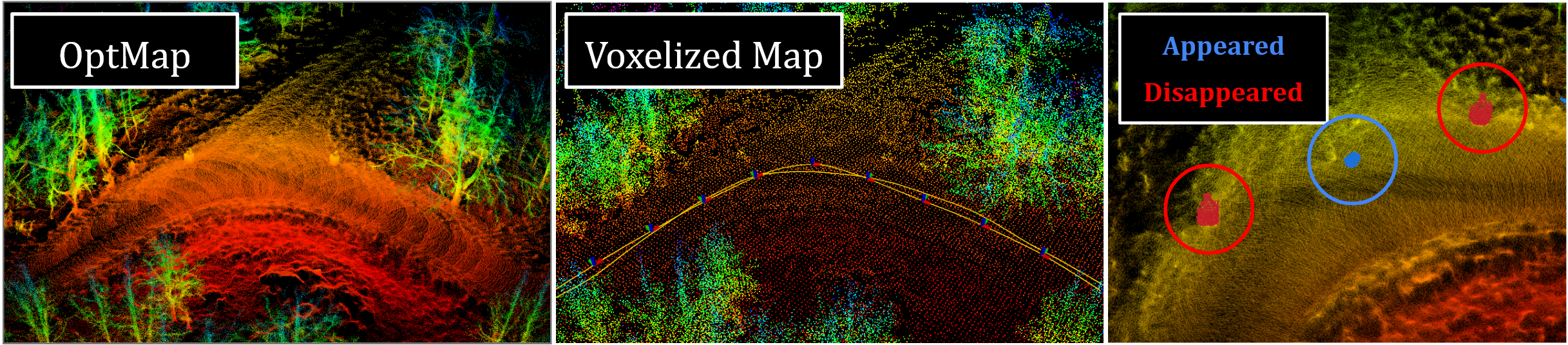}
    \caption{OptMap before map using 60 scans (left), voxelized global map from the corresponding SLAM algorithm (middle), and change detection results using OptMap (right). Points that appeared are shown in blue and points that disappeared are shown in red.}
    \label{fig:change_detection}
    \vskip -0.15in
\end{figure*}

The third experiment tests OptMap's return time on large LiDAR datasets with and without \emph{a priori} bounds.
By presenting the time to find and return actionable maps, we also highlight OptMap's online map distillation capability in the most challenging scenarios with large input set sizes.
This experiment distills three small datasets using maps with 25, 50, and 100 scans: Oxford Spires Observatory Quarter Sequences 1 and 3 \cite{tao2025spires}, and Semantic KITTI Sequence 10 \cite{behley2019iccv}.
Three larger datasets are distilled using 100, 250, and 500 scans: Semantic KITTI Sequence 02, the Newer College Dataset Long Experiment \cite{ramezani2020newer}, and Forest Loop, a custom dataset collected in an unstructured forest environment.
Load times are directly influenced by the number of points in each scan, which in turn is determined by the number of beams in the sensor.
Oxford Spires used a 64-beam Hesai QT64, KITTI used a 64-beam Velodyne HDL-64E, Newer College used a 64-beam Ouster OS1, and Forest Loop used a 128-beam Ouster OS1.
The two largest sessions are Newer College Dataset's Long Experiment \cite{ramezani2020newer} and Forest Loop, which contain 26,559 and 34,158 scans representing 44 and 28.5 minutes (or 18.6 and 34 GB) of mapping respectively.
OptMap was run five times for each configuration with negligible variation ($<20$ ms total max) in either the optimization or map loading time.

The optimization, map loading, and total times for all six datasets are presented in \cref{tab:result2_output}.
The third through fifth columns (labeled method 1) used \emph{a priori} bounds and the sixth through eighth (labeled method 2) did not.
The percent of total time saved by the \emph{a priori} bounds across the three smaller datasets was between 19$\%$-24$\%$, but the average percent saved across the three larger was 18$\%$-49$\%$.
The dramatic increase in time saved is due to the fact that tighter bounds reduce the number of solution sets that need to be maintained for larger input sets.
The total time increases linearly with the distilled map size due to linear increases in both optimization and map loading times.
Semantic KITTI is a challenging dataset for OptMap because it is collected on a relatively fast moving car.
This stresses the assumption that sequential descriptors are similar and thus results in larger input sets.

Summarization is a key application for OptMap, with visual results presented in \cref{fig:result3_kitti,fig:result3_NCD,fig:result3_forestloop}.
\Cref{fig:result3_kitti} demonstrates generated summary maps with flexible size, where size-constrained summarization could be valuable for bandwidth-aware multi-robot map sharing or global place recognition.
\Cref{fig:result3_NCD} shows a high-fidelity and extremely detailed map of the Newer College Dataset Long Experiment generated by OptMap.
The inset shows a detailed map that is not usually available from LiDAR SLAM due to the computation and memory load required for maintaining dense maps.
Finally, \cref{fig:result3_forestloop} shows the largest OptMap distilled map by number of input scans.
We separate optimization and map loading times in \cref{tab:result2_output} to highlight the fact that a significant portion of OptMap's total time comes from needing to load LiDAR scans.
This demonstrates the need for indirect optimization (i.e., no point-to-point computations) in geometric map distillation, which is further highlighted in the following result.

\subsection{Application: Geometric Change Detection}
\label{subsec:change_detect}

The final result highlights a practical application of OptMap for online change detection.
We assume a human operator has identified a particular location that has been mapped multiple times to check for objects that have appeared, disappeared, or moved.
Changes can be identified by comparing two point clouds from different times, where the distance from each point in the before (resp. after) map to its nearest neighbor in the after (resp. before) map is used to determine if the point is a change.
Results are better when the two maps are dense (i.e., the average distance between points is small) and thus change detection is usually handled by offline algorithms such as \cite{kim2022lt}. 
However, for patrol or search-and-rescue missions, online change detection may be very pertinent for mission planning (e.g., a set of fresh footprints are detected).
Although some geometric change detection algorithms are real-time \cite{qian2022pocd, wellhausen2017reliable, krawciw2024lasersam, krawciw2023changeofscenery}, they only compare a single sensor output to a map and therefore lack the density to detect subtle details.
A full comparison is beyond the scope of this work, but to the best of our knowledge, there is no current online geometric change detection algorithm which utilizes dense before and after maps.

A dense before submap generated by OptMap from a junction in a dataset collected at the Army Research Laboratory is shown in \cref{fig:change_detection} with the before, after, and difference map shown in \cref{fig:top-right-changedetect}.
The operator defined the position and time bounds for the before and after map given knowledge of the mapping session.
Points in either map are labeled as a potential change if the nearest point in the other map is at least 10 cm away.
Potential changes are then pruned by a simple noise filter that discards points where the average distance to the nearest 50 potential change points is more than 5 cm away.
This noise filter is meant to discard spurious incorrect changes that might result from mismatched scan lines or field of view differences.
The results of change detection are shown on the right of \cref{fig:change_detection}, where two 60 cm tall cases were identified as negative changes, and a 20 cm tall backpack was identified as a positive change with no incorrect detections.
The time to generate each OptMap output with 60 scans was just 196 ms.

This result is remarkable because the change detection maps cannot be generated using a simple downsampling method (i.e., voxelization).
The left image in \cref{fig:change_detection} shows the OptMap before map and center shows a voxelized dense map (with resolution 0.25 m).
This resolution is selected such that the full map for a mapping session up to approximately 5 km could be stored in the available 16 GB of computer memory.
The before and after submaps each contain approximately fifteen million points in twenty meters traveled.
With approximately 40 bytes per point, the memory burden of saving a global map with equal density would be approximately 15 MB per meter traveled.
Such a rate is not feasible as even short sessions that traverse 1 km would demand nearly 15 GB of memory.
Current research indicates a push for lightweight, memory optimized SLAM algorithms that save fewer points \cite{thorne2025submodular, dong2025lidar}, so OptMap is positioned to supplement these algorithms by providing flexible, online, and application-specific maps such as those shown in \cref{fig:change_detection}.

\section{Conclusion}
\label{sec:conclusion}

This work presented OptMap, an online geometric map distillation algorithm that can generate provably near-optimal representative distillation maps from large LiDAR mapping datasets.
We propose multiple theoretical and algorithmic innovations which provide a means for quantifying informativeness in geometric maps, and allows for the efficient optimization of such maps.
The novel submodular reward function Continuous Exemplar-Based Clustering (CEBC) effectively rewards informativeness with the added ability to significantly reduce the size of input sets and improve computation time via the elimination of sequentially redundant elements.
Dynamic reordering is a novel addition to streaming submodular maximization algorithms which uses multiple approximation functions for CEBC to address input order bias.
Tight \emph{a priori} solution bounds further reduce computation time, and basic position and time constraints are introduced to make OptMap immediately useful to several applications including online change detection and multi-robot map sharing.
The results demonstrate the efficacy of CEBC, dynamic reordering, and tight \emph{a priori} bounds across a range of custom and open-source LiDAR datasets.
We demonstrate a practical application for OptMap in online change detection and envision several other similar novel capabilities that could be derived from OptMap.

Future work on OptMap can incorporate higher quality learned descriptors and other similarity metrics for improved distillation results across a wider variety of environments.
Although the descriptors in this work were shown to accurately describe a LiDAR scan, descriptors that better discriminate based on local features (i.e., groups of points from the same object) could enable more sophisticated submodular functions for selecting localization submaps or sharing relevant scans without knowing relative poses.
Additional work in generalizing dynamic reordering could yield a novel generalized streaming submodular maximization algorithm that uses limited \emph{a priori} information to address significant forms of input order bias in computationally efficient methods.
Improved functionality could also be derived from general matroid constraints.
For example, a knapsack constraint could enable input sets with multiple contributing sensor types, and set coverage constraints could allow for minimum size solutions which satisfy a defined coverage criterion.
Such innovations can lead to a tighter, bidirectional integration between geometric mapping and the downstream autonomy algorithms or operators that rely on compact and informative geometric maps.
\section*{Appendix}
\label{sec:appendix}

\begin{proof}[Proof of \cref{cor:outliers}]
    For each outlier $r_{i}$ and its corresponding source set $e_{j}$, $j \in \eta_{i}$, the difference in the distance to an arbitrary solution is $|d(r_{i},S)-d(e_{j},S)| \leq |2-0| \leq 2\mathcal{E}+2$ because the maximum distance between points on a hypersphere is $2$.
    Without loss of generality, assume that the $N$ outliers are the final $N$ elements of the reduced set, where elements of the full input set may still exist between these outliers.
    Restating the third line of \cref{theorem:similarity_of_scores} then separating the sum over the reduced set into inliers and outliers is then
    \begin{align*}
        |\Gamma_{E}(S) &- \Gamma_{R}(S)| \\  
        &= \frac{1}{d_{tot}}\sum_{i=1}^{|R|}\Bigl|\sum_{j \in \eta_{i}}w_{j}(d(e_{j},S) - d(r_{i},S))\Bigr|, \\
        &= \frac{1}{d_{tot}} \sum_{i=1}^{|R|-N-1}\Bigl|\sum_{j \in \eta_{i}}w_{j}(d(e_{j},S) - d(r_{i},S))\Bigr| \\
        &\hphantom{=} + \frac{1}{d_{tot}} \sum_{i=|R|-N}^{|R|}\Bigl|\sum_{j \in \eta_{i}}w_{j}(d(e_{j},S) - d(r_{i},S))\Bigr|,
    \end{align*}
    where we can apply \cref{lem:triangle} to the first $|R|-N-1$ elements and $|d(r_{i},S)-d(e_{j},S)| \leq 2\mathcal{E}+2$ to the final $N$ outliers.
    This results in
    \begin{align*}
        |\Gamma_{E}(S) &- \Gamma_{R}(S)| \\ 
        &\leq \frac{1}{d_{tot}} \sum_{i=1}^{|R|-N-1}\Bigl|\sum_{j \in \eta_{i}}2w_{j}\mathcal{E}\Bigr| \\
        &\hphantom{=}+ \frac{1}{d_{tot}} \sum_{i=|R|-N}^{|R|}\Bigl|\sum_{j \in \eta_{i}}w_{j}(2\mathcal{E}+2)\Bigr|, \\
        &= \frac{1}{d_{tot}} \sum_{i=1}^{|R|}\Bigl|\sum_{j \in \eta_{i}}2w_{j}\mathcal{E}\Bigr| + \frac{1}{d_{tot}} \sum_{i=|R|-N}^{|R|}\Bigl|\sum_{j \in \eta_{i}}2w_{j}\Bigr|.
    \end{align*}
    Using $\hat{w}_{i} = \sum_{j \in \eta_{i}}w_{j}$ and $\hat{w}_{i} \leq 2+\mathcal{E}$, we get
    \begin{align*}
        |\Gamma_{E}(S) - \Gamma_{R}(S)|\leq 2\mathcal{E} + \frac{2\mathcal{E}+4N}{d_{tot}}. 
    \end{align*}
    as desired. \qedhere
\end{proof}

\IEEEpubidadjcol




\bibliographystyle{IEEEtran}
\bibliography{references}

\newpage

 


\vspace{11pt}

\vfill

\end{document}